\documentclass[11pt,onecolumn]{IEEEtran}
\usepackage{amsthm}
\usepackage{times,amssymb,amsmath,amsfonts,float,nicefrac,color,bbm,mathrsfs,caption,float}
\usepackage{algorithm,enumerate,multirow,caption,tikz,graphicx}
\usepackage{algpseudocode}
\usepackage[noadjust]{cite}
\usepackage{subcaption}
\usepackage{slashbox}
\interdisplaylinepenalty=5000

\usepackage[top=1in,bottom=1in,left=1in,right=1in]{geometry}
\interdisplaylinepenalty=5500
\allowdisplaybreaks

\newcommand{\RN}[1]{%
  \textup{\expandafter{\romannumeral#1}}%
}

\tikzset{
  block/.style    = {draw, thick, rectangle, minimum width = 3em},
  sblock/.style      = {draw, thick, rectangle, minimum height = 3em,
    minimum width = 3em}, 
}

\newcommand\remove[1]{}

\allowdisplaybreaks

\newtheorem{theorem}{Theorem}
\newtheorem{definition}{Definition}
\newtheorem{proposition}{Proposition}

\newtheorem{claim}{Claim}

\newtheorem{corollary}{Corollary}

\newtheorem{remark}{Remark}

\newtheorem{cnstr}{Construction}


\newcommand{\cF}{\mathcal{F}}

\DeclareMathOperator{\tot}{tot}

\DeclareMathOperator{\Exp}{Exp}

\begin{document}
\title{Communication-Computation Efficient Gradient Coding}

\author{Min Ye \and \hspace*{.4in} Emmanuel Abbe}

\maketitle
{\renewcommand{\thefootnote}{}\footnotetext{

\vspace{-.2in}
 
\noindent\rule{1.5in}{.4pt}

M. Ye is with Department of Electrical Engineering, Princeton University, Princeton, NJ, email: yeemmi@gmail.com. E. Abbe is with the Program in Applied and Computational Mathematics and Department of Electrical Engineering, Princeton University, and the School of Mathematics, Institute for Advanced Study, Princeton, NJ 08544, USA, email: eabbe@princeton.edu. This work was partly supported by NSF CAREER Award CCF-1552131, NSF Center for the Science of Information CCF-0939370, and the Google Faculty Research Award.
}

\renewcommand{\thefootnote}{\arabic{footnote}}
\setcounter{footnote}{0}

\begin{abstract}
This paper develops coding techniques to reduce the running time of distributed learning tasks. It characterizes the fundamental tradeoff to compute gradients (and more generally vector summations) in terms of three parameters: computation load, straggler tolerance and communication cost.
It further gives an explicit coding scheme that achieves the optimal tradeoff based on recursive polynomial constructions, coding both across data subsets and vector components.   
As a result, the proposed scheme allows to minimize the running time for gradient computations.
Implementations are made on Amazon EC2 clusters using Python with mpi4py package. Results show that the proposed scheme maintains the same generalization error while reducing the running time by $32\%$ compared to uncoded schemes and $23\%$ compared to prior coded schemes focusing only on stragglers (Tandon et al., ICML 2017).
\end{abstract}

\section{introduction}
Distributed computation plays a key role in the computational challenges faced by machine learning for large data sets \cite{Dean12,Abadi16}. This requires overcoming a few obstacles: First the straggler effect, i.e., slow workers that hamper the computation time. Second, the communication cost; gradients in deep learning typically consist nowadays in millions of real-valued components, and the transmission of these high-dimensional vectors can amortize the savings of the computation time in large-scale distributed systems \cite{Recht11,Li14scaling,Li14}.
This has driven researchers to use in particular gradient sparsification and gradient quantization to reduce communication cost \cite{Gupta15,Alistarh17,Wen17}.

More recently, coding theory has found its way into distributed computing \cite{Li15,Lee16,Tandon17,Halbawi17,Raviv17,Dutta16,Dutta17,Yu17,Yu17FT,Yu18,
Yang17,Li17,Karakus17,Charles17,Zhu17,Li17fund}, following the path of exporting coding techniques to distributed storage \cite{Dimakis10,Ye17}, caching \cite{Maddah15} and queuing \cite{Joshi15}. A few works have also initiated the use of coding techniques in distributed learning \cite{Li15,Lee16,Tandon17}. Of particular interest to us is \cite{Tandon17}, which introduces coding techniques to mitigate the effect of stragglers in gradient computation. While this is a central task in machine learning, \cite{Tandon17} does not take into account the communication cost which is important in such applications as mentioned above.

This paper takes a global view on the running time of distributed learning tasks by considering the three parameters, namely, computation load, straggler tolerance and communication cost.
We identify a three-fold fundamental tradeoff between these parameters in order to efficiently compute gradients (and more generally summations of vectors), exploiting distributivity both across data subsets and vector components. 
The tradeoff reads 
\begin{equation}\label{eq:1mrt}
\frac{d}{k}\ge \frac{s+m}{n},
\end{equation}
where $n$ is the number of workers, $k$ is the number of data subsets, $d$ is the number of data subsets assigned to each worker, $s$ is the number of stragglers, and $m$ is the communication reduction factor. This generalizes the results in \cite{Tandon17} that correspond to $m=1$. Note that one cannot derive
\eqref{eq:1mrt} from the results of \cite{Tandon17}, and we will explain this in more detail below.

We further give an explicit code construction based on recursive polynomials that achieves the derived tradeoff. 
The key steps in our coding scheme are as follows: In order to reduce the dimension of transmitted vector for each worker, we first partition the coordinates of the gradient vector into $m$ groups of equal size.
Then we design two matrices $B$ and $V$, where the $(n-s)\times n$ matrix $V$ has the property that any $(n-s)\times (n-s)$ submatrix is invertible. This property corresponds to the requirement that our coding scheme can tolerate {\em any} $s$ stragglers, and it can be easily satisfied by setting $V$ to be a (non-square) Vandermonde matrix. Furthermore, the $(mn)\times (n-s)$ matrix $B$ satisfies the following two property: (1) the last $m$ columns of $B$ consisting of $n$ identity matrices of size $m\times m$; 
(2) for every $j\in[n]$, the product of the $i$th row of $B$ and the $j$th column of $V$ must be $0$ for a specific set of values of $i$, and the cardinality of this set is $(n-d)m$. The first property of $B$ guarantees the recovery of the sum gradient vector, and the second property ensures that each worker is assigned at most $d$ data subsets.
We make use of the natural connection between the Vandermonde structure and polynomials to construct our matrix $B$ recursively: More precisely, we can view each row of $B$ as coefficients of some polynomial, and the product of $B$ and $V$ simply consists of the evaluations of these polynomials at certain points. We can then define these polynomials by specifying their roots so that the two properties of $B$ are satisfied.
We also mention that the conditions in our construction are more restrictive than those in \cite{Dutta16} and \cite{Tandon17,Halbawi17,Raviv17}: In our setting, the conditions in \cite{Dutta16} only require that the last $m$ columns of $B$ contain at most $n$ nonzero entries, and no requirements are imposed on the positions of these nonzero entries; as mentioned above, \cite{Tandon17,Halbawi17,Raviv17} only deal with the special case of $m=1$ and do not allow for dimensionality reduction of the gradient vectors. Due to these more relaxed conditions, the constructions in \cite{Dutta16} and \cite{Tandon17,Halbawi17,Raviv17} do not have the recursive polynomial structure, which is the main technical novelty in our paper.

We further take numerical stability issue into consideration, and characterize an achievable region of the triple $(d,s,m)$ under a given upper bound $\kappa$ of condition numbers of all the operations in the gradient reconstruction phase. We also present another coding scheme based on random matrices to achieve this region.

We support our theoretical findings by implementing our scheme on Amazon EC2 clusters using Python with mpi4py package. Experimental results show that the proposed scheme reduces the running time by $32\%$ compared to uncoded schemes and by $23\%$ compared to prior work \cite{Tandon17}, while maintaining the same generalization error on the Amazon Employee Access dataset from Kaggle, which was also used in \cite{Tandon17} for state-of-the-art experiments.

\subsection{Related literature}
Slow workers (processors) called ``stragglers" can hamper the computation time as the taskmaster needs to wait for all workers to complete their processing.
Recent literature proposes adding redundancy in computation tasks of each worker so that the taskmaster can compute the final result using outputs from only a subset of workers and ignore the stragglers.
The most popular ways to introduce redundancy in computation are based on either replication schemes or coding theoretic techniques \cite{Ana13,Wang14,Shah16,Lee16}.
Lee et al. \cite{Lee16} initialized the study of using erasure-correcting codes to mitigate straggler effects for linear machine learning tasks such as linear regression and matrix multiplication. 
Subsequently, Dutta et al. proposed new efficient coding schemes to calculate convolutions \cite{Dutta17} and the product of a matrix and a long vector \cite{Dutta16}, Yu et al.  introduced optimal coding schemes to compute high-dimensional matrix multiplication \cite{Yu17,Yu18} and Fourier Transform \cite{Yu17FT}, and Yang et al. developed coding methods for parallel iterative linear solver \cite{Yang17}.
Tandon et al. \cite{Tandon17} further used coding theoretic methods to avoid stragglers in nonlinear learning tasks. More specifically, \cite{Tandon17} presented an optimal trade-off between the computation load and {\em straggler tolerance} (the number of tolerable stragglers) in synchronous gradient descent for {\em any} loss function.
Several code constructions achieving this trade-off were given in \cite{Tandon17,Halbawi17,Raviv17}.
Li et al. \cite{Li17} considered distributed gradient descent under a probabilistic model and proposed the Batched Coupon's Collector scheme to alleviate straggler effect under this model.
At the same time, the schemes in \cite{Tandon17,Halbawi17,Raviv17} are designed to combat stragglers for the worst-case scenario.
While most research focused on recovering the exact results in the presence of stragglers,
 \cite{Karakus17,Raviv17,Charles17} suggested allowing some small deviations from the exact gradient in each iteration of the gradient descent and showed that one can obtain a good approximation of the original solution by using coding theoretic methods.
Very recently, Zhu et al. \cite{Zhu17} proposed a sequential approximation method for distributed learning in the presence of stragglers, and their method is also based on erasure-correcting codes.

As mentioned above, high network communication cost for synchronizing gradients and parameters
is also a well-known bottleneck of distributed learning. In particular for deep learning, gradient vectors typically consist of millions of real numbers, and for large-scale distributed systems, transmissions of high-dimensional gradient vectors might even amortize the savings of computation time \cite{Recht11,Li14scaling,Li14}.
The most widely used methods to reduce communication cost in the literature are based on gradient sparsification and gradient quantization \cite{Gupta15,Alistarh17,Wen17}.

In this paper we directly incorporate the communication cost into the framework of reducing running time for gradient computation, in addition to computation load and straggler tolerance. In particular, we take advantage of distributing the computations over subsets of vector components in addition to subsets of data samples. The advantages of our coding scheme over the uncoded schemes and the schemes in \cite{Tandon17,Halbawi17,Raviv17} are demonstrated by both experimental results and numerical analysis in Sections \ref{exp1} and \ref{ana1}. 
We also strengthen the numerical analyses by studying the behavior of the running time using probabilistic models for the computation and communication times, obtaining improvements that are consistent with the outcome of the Amazon experiments (see Section \ref{ana1}). 
Our results apply to both batch gradient descent and mini-batch stochastic gradient descent (SGD), which is the most popular algorithm in large-scale distributed learning.
Moreover, our coding theoretic method is orthogonal to the gradient sparsification and gradient quantization methods \cite{Gupta15,Alistarh17,Wen17}. In other words, our method can be used on top of the latter ones.

As a final remark, \cite{Li15,Li17fund} also studied the trade-off between computation and communication in distributed learning, but the problem setup in \cite{Li15,Li17fund} is different from our work in nature. We study distributed gradient descent while \cite{Li15,Li17fund} focused on MapReduce framework. The communication in our problem is from all the worker nodes to one master node, while the communication in \cite{Li15,Li17fund} is from all workers to all workers, and there is no master node in \cite{Li15,Li17fund}. 
This difference in problem setup leads to completely different results and techniques.


\section{Problem formulation and main results}
We begin with a brief introduction on distributed gradient descent. Given a dataset $D=\{(x_i,y_i)\}_{i=1}^N$, where $x_i\in \mathbb{R}^l$ and $y\in\mathbb{R}$, we want to learn parameters $\beta\in\mathbb{R}^l$ by minimizing a generic loss function $L(D;\beta):=\sum_{i=1}^N L(x_i,y_i;\beta)$, for which gradient descent is commonly used. More specifically, we begin with some initial guess of $\beta$ as $\beta^{(0)}$, and then update the parameters according to the following rule:
\begin{equation}\label{eq:upd}
\beta^{(t+1)} = h(\beta^{(t)},g^{(t)}),
\end{equation}
where $g^{(t)}:=\nabla L(D;\beta^{(t)}) = \sum_{i=1}^N \nabla L(x_i,y_i;\beta^{(t)})$ is the gradient of the loss at the current estimate of the parameters and $h$ is a gradient-based optimizer.
As in \cite{Tandon17},
we assume that there are $n$ workers $W_1,W_2,\dots,W_n$, and that the original dataset $D$ is partitioned into $k$ subsets of equal size, denoted as $D_1,D_2,\dots,D_k$.
Define the partial gradient vector of $D_i$ as $g_i^{(t)}:=\sum_{(x,y)\in D_i}\nabla L(x,y;\beta^{(t)})$. Clearly $g^{(t)}=g_1^{(t)}+g_2^{(t)}+\dots+g_k^{(t)}$. 
 Suppose that each worker is assigned $d$ data subsets, and there are $s$ stragglers, i.e., we only wait for the results from the first $n-s$ workers.
For $i=1,2,\dots,n$, we write the datasets assigned to worker $W_i$ as $\{D_{i_1},D_{i_2},\dots,D_{i_d}\}$.
Each worker computes its partial gradient vectors $g_{i_1}^{(t)},g_{i_2}^{(t)},\dots,g_{i_d}^{(t)}$ and returns $f_i(g_{i_1}^{(t)},g_{i_2}^{(t)},\dots,g_{i_d}^{(t)})$, a prespecified function of these partial gradients.
In order to update the parameters according to \eqref{eq:upd},
we require that  the sum gradient vector $g^{(t)}$ can be recovered from the results of the first $n-s$ workers no matter who the $s$ stragglers will be. 
Due to complexity consideration, we would further like $f_i,i\in[n]$ to be linear functions.
 Lee et al. \cite{Tandon17} showed that this is possible if and only if 
$$
\frac{d}{k}\ge \frac{s+1}{n}.
$$
Since the functions $f_i,i\in[n]$ are time invariant, in the rest of this paper we will omit the superscript $(t)$ for simplicity of notation.
Recall that in batch gradient descent, we use all the samples to update parameters in each iteration, and in mini-batch SGD we use a small portion of the whole dataset in each iteration.
Since we only focus on each iteration of the gradient descent algorithm, our results apply to both batch gradient descent and mini-batch SGD.

Let us write each partial gradient vector as
$g_i=(g_i(0),g_i(1),\dots,g_i(l-1))$ for $i=1,2,\dots,k$.
We will show that when $s\le\frac{d}{k}n-1$, each worker only needs to transmit a vector\footnote{Assume that $k|(dn)$ and $(\frac{d}{k}n-s)|l$.} of dimension
$l/(\frac{d}{k}n-s)$. In other words, we can reduce the communication cost by a factor of $\frac{d}{k}n-s$.

Roughly speaking, \cite{Tandon17} showed the following two-dimensional tradeoff: if we assign more computation load at each worker, then we can tolerate more stragglers. In this paper we will show a three-dimensional tradeoff between computation load at each worker, straggler tolerance and the communication cost: for a fixed computation load, we can reduce the communication cost by waiting for results from more workers.
Fig.~\ref{fig:sim} uses a toy example to illustrate this tradeoff as well as the basic idea of how to reduce the communication cost. In Fig.~\ref{fig:sim} the gradient vector has dimension $l=2$, and it is clear that this idea extends to gradient vectors of any dimension (by padding a zero when $l$ is odd).
To quantify the tradeoff, we introduce the following definition.

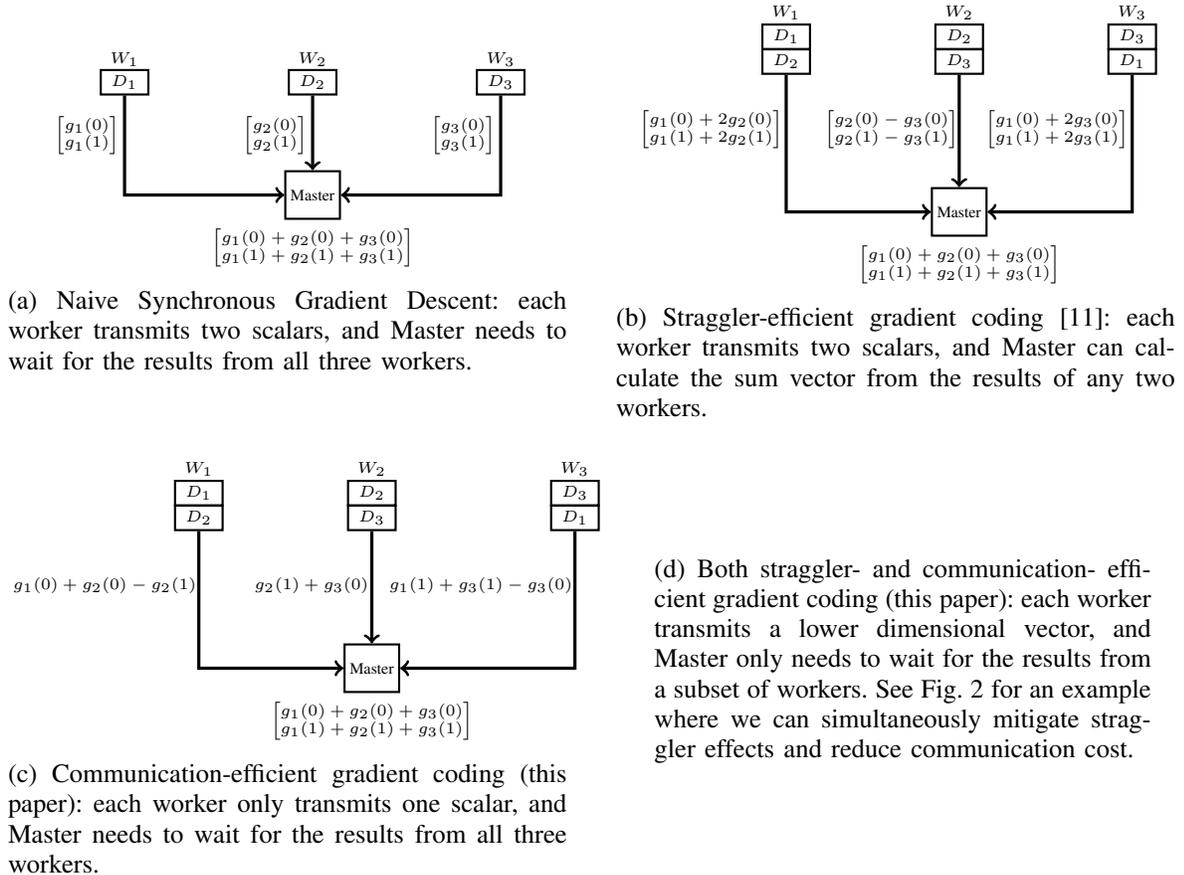
\begin{figure}[!htp]
{\tiny
\begin{subfigure}{.45\linewidth}
\centering
\begin{tikzpicture}
\draw
 node at (0,0) []   {$W_1$}
 node at (2.5,0) []  {$W_2$}
 node at (5,0) []  {$W_3$}
 node at (0,-0.3) [block] (w1)  {$D_1$}
 node at (2.5,-0.3) [block] (w2)  {$D_2$}
 node at (5,-0.3) [block] (w3)  {$D_3$}
node at (2.5,-1.8) [sblock] (m) {Master}
node at (-0.5,-1) [] {$\begin{bmatrix} g_1(0) \\ g_1(1) \end{bmatrix}$}
node at (2,-1) [] {$\begin{bmatrix} g_2(0) \\ g_2(1) \end{bmatrix}$}
node at (4.5,-1) [] {$\begin{bmatrix} g_3(0) \\ g_3(1) \end{bmatrix}$}
node at (2.5,-2.5) [] {$\begin{bmatrix} g_1(0)+g_2(0)+g_3(0) \\ g_1(1)+g_2(1)+g_3(1)
\end{bmatrix}$};
\draw[very thick,->](w1) |- node {}(m);
\draw[very thick,->](w2) -- node {}(m);
\draw[very thick,->](w3) |- node {}(m);
\end{tikzpicture}
\caption{Naive Synchronous Gradient Descent: each worker transmits two scalars, and Master needs to wait for the results from all three workers.}
\end{subfigure} \hspace*{0.2in}
\begin{subfigure}{.45\linewidth}
\centering
\begin{tikzpicture}
\draw
 node at (0,0.65) []   {$W_1$}
 node at (2.3,0.65) []  {$W_2$}
 node at (4.6,0.65) []  {$W_3$}
 node at (0,0) [block] (w1) {$D_2$}
 node at (0,0.33) [block]   {$D_1$}
 node at (2.3,0) [block] (w2) {$D_3$}
 node at (2.3,0.33) [block]   {$D_2$}
 node at (4.6,0) [block] (w3) {$D_1$}
 node at (4.6,0.33) [block]   {$D_3$}
node at (2.3,-2) [sblock] (m) {Master}
node at (-1,-0.9) [] {$\begin{bmatrix} g_1(0)+2g_2(0) \\ g_1(1)+2g_2(1) \end{bmatrix}$}
node at (1.4,-0.9) [align=left] {$\begin{bmatrix} g_2(0) - g_3(0) \\ g_2(1) - g_3(1) \end{bmatrix}$}
node at (3.6,-0.9) [align=left] {$\begin{bmatrix} g_1(0)+2g_3(0) \\ g_1(1)+2g_3(1) \end{bmatrix}$}
node at (2.3,-2.7) [align=left] {$\begin{bmatrix} g_1(0)+g_2(0)+g_3(0) \\ g_1(1)+g_2(1)+g_3(1) \end{bmatrix}$};
\draw[very thick,->](w1) |- node {}(m);
\draw[very thick,->](w2) -- node {}(m);
\draw[very thick,->](w3) |- node {}(m);
\end{tikzpicture}
\caption{Straggler-efficient gradient coding \cite{Tandon17}: each worker transmits two scalars, and Master can calculate the sum vector from the results of any two workers.}
\end{subfigure} 
\par\bigskip

\begin{subfigure}{.45\linewidth}
\centering
\begin{tikzpicture}
\draw
 node at (0,0.65) []   {$W_1$}
 node at (2.3,0.65) []  {$W_2$}
 node at (5,0.65) []  {$W_3$}
 node at (0,0) [block] (w1) {$D_2$}
 node at (0,0.33) [block]   {$D_1$}
 node at (2.3,0) [block] (w2) {$D_3$}
 node at (2.3,0.33) [block]   {$D_2$}
 node at (5,0) [block] (w3) {$D_1$}
 node at (5,0.33) [block]   {$D_3$}
node at (2.3,-2) [sblock] (m) {Master}
node at (-1.25,-0.9) [] {$g_1(0)+g_2(0)-g_2(1)$}
node at (1.5,-0.9) [] {$g_2(1) + g_3(0)$}
node at (3.75,-0.9) [] {$g_1(1)+g_3(1)-g_3(0)$}
node at (2.3,-2.7) [align=left] {$\begin{bmatrix} g_1(0)+g_2(0)+g_3(0) \\ g_1(1)+g_2(1)+g_3(1) \end{bmatrix}$};
\draw[very thick,->](w1) |- node {}(m);
\draw[very thick,->](w2) -- node {}(m);
\draw[very thick,->](w3) |- node {}(m);
\end{tikzpicture}
\caption{Communication-efficient gradient coding (this paper): each worker only transmits one scalar, and Master needs to wait for the results from all three workers.}
\end{subfigure} \hspace*{0.4in}
\begin{subfigure}{.4\linewidth}
\caption{Both straggler- and communication- efficient gradient coding (this paper): each worker  transmits a lower dimensional vector, and Master only needs to wait for the results from a subset of workers.
See Fig.~\ref{fig:cmp} for an example where we can simultaneously mitigate straggler effects and reduce communication cost.}
\end{subfigure}
\caption{The idea of communication efficient gradient coding.}
\label{fig:sim}}
\end{figure}

\begin{definition} \label{def:m}
Given $n$ and $k$, we say that a triple of nonnegative integers $(d,s,m)$ satisfying that $1\le d\le k$ and $m\ge 1$ is achievable\footnote{Throughout we assume that $m|l$. Since $l$ is typically very large and $m$ is relatively small, the condition $m|l$ can always be satisfied by padding a few zeroes at the end of the gradient vectors.} if there is a distributed synchronous gradient descent scheme such that
\begin{enumerate}
\item Each worker is assigned $d$ data subsets. 
\item There are $n$ functions $f_1,f_2,\dots,f_n$ from $\mathbb{R}^{dl}$ to $\mathbb{R}^{l/m}$ such that the gradient vector $g_1+g_2+\dots+g_k$ can be recovered from any $n-s$ out of the following $n$ vectors
\begin{equation}\label{eq:fi}
f_i(g_{i_1},g_{i_2},\dots,g_{i_d}),i=1,2,\dots,n,
\end{equation}
where $i_1,i_2,\dots,i_d$ are the indices of datasets assigned to worker $W_i$.
\item $f_1,f_2,\dots,f_n$ are linear functions. In other words, $f_i(g_{i_1},g_{i_2},\dots,g_{i_d})$ is a linear combination of the coordinates of the partial gradient vectors $g_{i_1},g_{i_2},\dots,g_{i_d}$.
\end{enumerate}
\end{definition}

For readers' convenience, we list the main notation in Table~\ref{tb:nt}. Next we state the main theorem of this paper.
\begin{table}[t]
\centering
\begin{tabular}{| c | c |}
\hline 
$n$ & the number of workers \\  \hline
$k$ & the number of data subsets in total; in most part of the paper we assume $n=k$ \\  \hline
$d$ & the number of data subsets assigned to each worker \\  \hline
$s$ & the number of stragglers \\  \hline
$m$ & the communication cost reduction factor \\  \hline
$l$ & the dimension of gradient vectors \\  \hline
$g_i,i\in[k]$ & the partial gradient vector of data subset $D_i$;
$g_i=(g_i(0),g_i(1),\dots,g_i(l-1))$ \\  \hline
$f_i(g_{i_1},g_{i_2},\dots,g_{i_d})$ & the transmitted vector of worker $W_i$, sometimes abbreviated as $f_i$ \\ \hline
\end{tabular}
\caption{Main notation}
\label{tb:nt}
\end{table}

\begin{theorem}\label{thm:main}
Let $k,n$ be positive integers.
A triple $(d,s,m)$ is achievable if and only if 
\begin{equation}\label{eq:obj}
\frac{d}{k}\ge \frac{s+m}{n}.
\end{equation}
\end{theorem}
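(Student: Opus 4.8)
The plan is to prove the two implications separately. For necessity (``only if''), I would use a pigeonhole argument followed by a dimension count. Since each of the $n$ workers is assigned $d$ of the $k$ subsets, the number of (worker, subset) incidences is $nd$, so some subset $D_{j^\ast}$ is assigned to at most $nd/k$ workers; call this worker set $T$ and put $t=|T|$. Specialize the data so that $g_{j'}=0$ for all $j'\neq j^\ast$; then $g_1+\dots+g_k=g_{j^\ast}$, and every worker outside $T$ transmits a vector that is identically $0$ on this family of inputs, so only the workers in $T$ carry information about $g_{j^\ast}$. Choosing the $s$ stragglers inside $T$ as much as possible, the responding set $S$ satisfies $|S\cap T|=\max(0,t-s)$: if $t\le s$ the responders carry no information about $g_{j^\ast}\in\mathbb R^l$, a contradiction, so $t>s$; and then the linear map $g_{j^\ast}\mapsto (f_i)_{i\in S}$ from $\mathbb R^l$ into $(\mathbb R^{l/m})^{|S\cap T|}$ must be injective, forcing $(t-s)\cdot(l/m)\ge l$, i.e.\ $t-s\ge m$. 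With $t\le nd/k$ this gives $nd/k\ge s+m$, which is \eqref{eq:obj}.

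For sufficiency (``if'') I would assemble the linear-algebraic scaffolding the introduction describes and reduce the whole scheme to the existence of one matrix. Write $l=mr$, split each partial gradient $g_j=(g_j^{(1)},\dots,g_j^{(m)})$ into $m$ blocks in $\mathbb R^r$, and stack all $mk$ blocks into a super-vector $\mathbf g$. I would look for linear functions of the form $(f_1,\dots,f_n)^{\top}=A\,\mathbf g$ with $A=V^{\top}B^{\top}$, where $V$ is the $(n-s)\times n$ Vandermonde matrix on distinct nodes $\alpha_1,\dots,\alpha_n$ (so every $(n-s)\times(n-s)$ submatrix is invertible) and $B$ is $(mk)\times(n-s)$ with rows indexed by pairs $(j,p)$. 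Then worker $W_i$'s coefficient on block $g_j^{(p)}$ equals $b_{j,p}(\alpha_i)$, where $b_{j,p}$ is the polynomial of degree $<n-s$ whose coefficient vector is the $(j,p)$-th row of $B$. I impose two properties on $B$: (1) the last $m$ columns of $B$ are $k$ stacked copies of $I_m$, equivalently $b_{j,p}(x)=x^{\,n-s-m+p-1}+(\text{terms of degree}\le n-s-m-1)$ — because then, for any responding set $S$, the relevant $(n-s)\times(n-s)$ block of $V$ is invertible, so the decoder recovers $B^{\top}\mathbf g$, whose last $m$ coordinates are exactly the block-sums $\sum_j g_j^{(p)}$, hence $g_1+\dots+g_k$; and (2) for each worker $i$ there is a union of $k-d$ row-blocks of $B$ orthogonal to $(1,\alpha_i,\dots,\alpha_i^{\,n-s-1})^{\top}$, i.e.\ $b_{j,p}(\alpha_i)=0$ for all $p$ whenever $D_j$ is not assigned to $W_i$, which caps the load at $d$.

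Constructing $B$ is the crux. Fix a balanced (e.g.\ cyclic) assignment so that each $D_j$ goes to exactly $nd/k$ workers, and let $N_j$ be the set of the remaining $n-nd/k$ workers and $P_j(x)=\prod_{i\in N_j}(x-\alpha_i)$. For each $(j,p)$ I need a polynomial $b_{j,p}$ of the form in (1) divisible by $P_j$. Since $b_{j,p}$ has $n-s-m$ free low-order coefficients while $\deg P_j=|N_j|=n-nd/k\le n-s-m$ by \eqref{eq:obj}, the vanishing conditions form a Vandermonde (hence full row rank) linear system and a solution exists; moreover these $mk$ choices are independent across $(j,p)$, so $B$ exists. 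To make it explicit (and to set up the later numerical-stability refinement) one takes $b_{j,1}$ to be any monic multiple of $P_j$ of degree $n-s-m$ and recursively sets $b_{j,p+1}(x)=x\,b_{j,p}(x)-c_{j,p}\,b_{j,1}(x)$, where $c_{j,p}$ is the coefficient of $x^{\,n-s-m-1}$ in $b_{j,p}$; one checks inductively that this kills exactly the coefficient that needs to vanish while keeping the polynomial monic of the right degree and divisible by $P_j$, so properties (1) and (2) are preserved at every step.

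The main obstacle is the achievability, specifically the realization that the recovery requirement and the load requirement decouple through the factorization $A=V^{\top}B^{\top}$ — recovery being entirely absorbed into the MDS property of $V$ together with the normalization (1) of $B$ — so that everything reduces to producing polynomials $b_{j,p}$ with a prescribed leading behavior and $n-nd/k$ prescribed roots, and that \eqref{eq:obj} is precisely the inequality ``(number of forced roots) $\le$ (number of free coefficients)'' making this possible. The remaining points are routine: the divisibility hypotheses $m\mid l$ and $k\mid nd$ (ensuring $l/m$ and $nd/k$ are integers), the existence of a balanced worker--subset assignment, and, if one insists on each worker being assigned \emph{exactly} $d$ subsets, padding with dummy (zero-coefficient) assignments.
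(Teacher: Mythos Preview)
Your proposal is correct and follows essentially the same approach as the paper: the converse is the same dimension count (the paper shows every $D_j$ is assigned to at least $s+m$ workers and then counts incidences, while you pick the least-covered $D_{j^\ast}$ and show it is still covered $\ge s+m$ times, which is the same argument run from the other side of the pigeonhole), and the achievability is exactly the paper's Vandermonde-times-$B$ factorization with the recursive polynomial construction $b_{j,p+1}(x)=x\,b_{j,p}(x)-c_{j,p}\,b_{j,1}(x)$ producing the stacked-identity last $m$ columns. The only cosmetic difference is that you keep $k$ general via a balanced assignment while the paper reduces to $k=n$ before constructing the scheme.
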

The converse proof is given in Appendix~\ref{ap:conv}, and the achievability scheme is given in Section~\ref{sect:code}.

Note that the special case $m=1$ in Theorem~\ref{thm:main} is the same as the case considered in \cite{Tandon17,Halbawi17,Raviv17}. We also remark that although \eqref{eq:obj} looks very similar to Theorem 1 in \cite{Dutta16}, their coding scheme can not be used to achieve \eqref{eq:obj} with equality when $m>1$. In Appendix~\ref{ap:dif}, we discuss the differences between our work and \cite{Dutta16} in detail. In particular, we show that the constraint in our problem is stronger than that in \cite{Dutta16}.

\begin{remark}\label{rm:n=k}
Notice that the computation load at each worker is known by $\frac{d}{k}$, not the value of $k$ itself.
We are interested in achieving the optimal computation load in \eqref{eq:obj}, and the value of $k$ does not matter. Therefore we will assume that $k=n$ for the remainder of this paper (except in Appendix~\ref{ap:conv} and Appendix~\ref{ap:dif}).
\end{remark}
Under this assumption, \eqref{eq:obj} has the following simple form
\begin{equation}\label{eq:sim}
d\ge s+m.
\end{equation}

In Fig.~\ref{fig:cmp} we take $n=k=5,d=3,l=2$, and show the implementation for two different choices of the pair $(s,m)$.
The communication cost of Fig.~\ref{sf:2} is half of that of Fig.~\ref{sf:1}, but the system in Fig.~\ref{sf:2} can only tolerate one straggler while the system in Fig.~\ref{sf:1} can tolerate two stragglers.
Table \ref{tb:how} below shows how to calculate the sum gradient vector in Fig.~\ref{sf:2} when there is one straggler. In the table we abbreviate $f_i(g_{i_1},g_{i_2},\dots,g_{i_d})$ in \eqref{eq:fi} as $f_i$, i.e., $f_i$ is the transmitted vector of $W_i$.

\begin{figure}[!htp]
{\tiny
\begin{subfigure}{\linewidth}
\centering
\begin{tikzpicture}
\draw
 node at (-3.5,0.94) []   {$W_1$}
 node at (0,0.94) []   {$W_2$}
 node at (3.5,0.94) []  {$W_3$}
 node at (7,0.94) []  {$W_4$}
 node at (10.5,0.94) []   {$W_5$}
 node at (-3.5,0) [block] (w1) {$D_3$}
 node at (-3.5,0.32) [block]   {$D_2$}
 node at (-3.5,0.64) [block]   {$D_1$}
 node at (0,0) [block] (w2) {$D_4$}
 node at (0,0.32) [block]   {$D_3$}
 node at (0,0.64) [block]   {$D_2$}
 node at (3.5,0) [block] (w3) {$D_5$}
 node at (3.5,0.32) [block]   {$D_4$}
 node at (3.5,0.64) [block]   {$D_3$}
 node at (7,0) [block] (w4) {$D_1$}
 node at (7,0.32) [block]   {$D_5$}
 node at (7,0.64) [block]   {$D_4$}
 node at (10.5,0) [block] (w5) {$D_2$}
 node at (10.5,0.32) [block]   {$D_1$}
 node at (10.5,0.64) [block]   {$D_5$}
node at (3.5,-1.5) [sblock] (m) {Master}
node at (-5,-0.7) [] {$\begin{bmatrix} g_1(0)+3g_2(0)+6g_3(0) \\ g_1(1)+3g_2(1)+6g_3(1) \end{bmatrix}$}
node at (-1.55,-0.7) [] {$\begin{bmatrix} 2g_2(0) + 6g_3(0) - 3g_4(0) \\ 2g_2(1) + 6g_3(1) - 3g_4(1) \end{bmatrix}$}
node at (2.05,-0.7) [] {$\begin{bmatrix} g_3(0)-2g_4(0)+g_5(0) \\ g_3(1)-2g_4(1)+g_5(1) \end{bmatrix}$}
node at (5.5,-0.7) [] {$\begin{bmatrix} 3g_4(0)-6g_5(0)-2g_1(0) \\ 3g_4(1)-6g_5(1)-2g_1(1) \end{bmatrix}$}
node at (9,-0.7) [] {$\begin{bmatrix} 6g_5(0)+3g_1(0)+g_2(0) \\ 6g_5(1)+3g_1(1)+g_2(1) \end{bmatrix}$}
node at (3.5,-2.1) [] {$\begin{bmatrix} g_1(0)+g_2(0)+g_3(0)+g_4(0)+g_5(0) \\ g_1(1)+g_2(1)+g_3(1)+g_4(1)+g_5(1) \end{bmatrix}$};
\draw[very thick,->](w1) |- node {}(m);
\draw[very thick,->](w2) |- node {}(m);
\draw[very thick,->](w3) -- node {}(m);
\draw[very thick,->](w4) |- node {}(m);
\draw[very thick,->](w5) |- node {}(m);
\end{tikzpicture}
\caption{$s=2,m=1$: Each worker transmits two scalars, and Master can calculate the sum vector from the results of any $3$ workers.}
\label{sf:1}
\end{subfigure}
\begin{subfigure}{\linewidth}
\centering
\begin{tikzpicture}
\draw
 node at (-3.5,0.94) []   {$W_1$}
 node at (0,0.94) []   {$W_2$}
 node at (3.5,0.94) []  {$W_3$}
 node at (7,0.94) []  {$W_4$}
 node at (10.5,0.94) []   {$W_5$}
 node at (-3.5,0) [block] (w1) {$D_3$}
 node at (-3.5,0.32) [block]   {$D_2$}
 node at (-3.5,0.64) [block]   {$D_1$}
 node at (0,0) [block] (w2) {$D_4$}
 node at (0,0.32) [block]   {$D_3$}
 node at (0,0.64) [block]   {$D_2$}
 node at (3.5,0) [block] (w3) {$D_5$}
 node at (3.5,0.32) [block]   {$D_4$}
 node at (3.5,0.64) [block]   {$D_3$}
 node at (7,0) [block] (w4) {$D_1$}
 node at (7,0.32) [block]   {$D_5$}
 node at (7,0.64) [block]   {$D_4$}
 node at (10.5,0) [block] (w5) {$D_2$}
 node at (10.5,0.32) [block]   {$D_1$}
 node at (10.5,0.64) [block]   {$D_5$}
node at (3.5,-1.5) [sblock] (m) {Master}
node at (-5,-0.7) [align=right] {$g_1(0)+3g_2(0)+6g_3(0)$ \\ $-3g_1(1)-3g_2(1)+6g_3(1)$}
node at (-1.45,-0.7) [align=right] {$2g_2(0) + 6g_3(0) - 3g_4(0)$ \\ $+12g_3(1) + 3g_4(1)$}
node at (2.2,-0.7) [align=right] {$g_3(0)-2g_4(0)+g_5(0)$ \\ $+3g_3(1)-3g_5(1)$}
node at (5.55,-0.7) [align=right] {$3g_4(0)-6g_5(0)-2g_1(0)$ \\ $+3g_4(1)+12g_5(1)$}
node at (8.95,-0.7) [align=right] {$6g_5(0)+3g_1(0)+g_2(0)$ \\ $-6g_5(1)+3g_1(1)+3g_2(1)$}
node at (3.5,-2.1) [align=left] {$\begin{bmatrix} g_1(0)+g_2(0)+g_3(0)+g_4(0)+g_5(0) \\ g_1(1)+g_2(1)+g_3(1)+g_4(1)+g_5(1) \end{bmatrix}$};
\draw[very thick,->](w1) |- node {}(m);
\draw[very thick,->](w2) |- node {}(m);
\draw[very thick,->](w3) -- node {}(m);
\draw[very thick,->](w4) |- node {}(m);
\draw[very thick,->](w5) |- node {}(m);
\end{tikzpicture}
\caption{$s=1,m=2$: Each worker only transmits one scalar, and Master can calculate the sum vector from the results of any $4$ workers. Table~\ref{tb:how} shows how to do this calculation.}
\label{sf:2}
\end{subfigure}
\caption{Tradeoff between communication cost and straggler tolerance}
\label{fig:cmp}}
\end{figure}
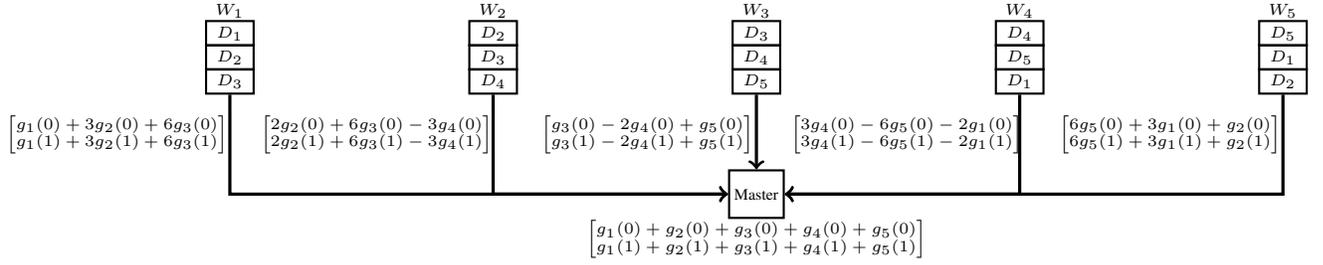
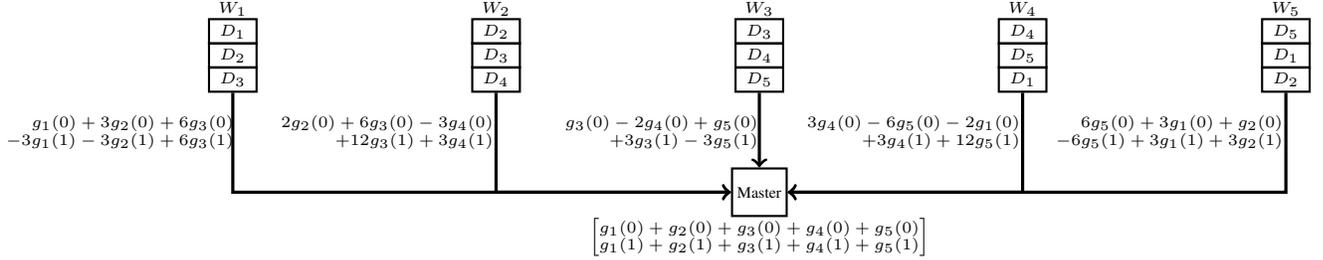

\begin{table}[h]
\centering
\begin{tabular}{| c | c | c |}
\hline 
 Straggler & Calculate $g_1(0)+g_2(0)+g_3(0)+g_4(0)+g_5(0)$ & Calculate $g_1(1)+g_2(1)+g_3(1)+g_4(1)+g_5(1)$ \\ [0.2em] \hline
$W_1$ & $\frac{1}{2}f_2-2f_3-\frac{1}{2}f_4$  &
$-\frac{1}{6}f_2+f_3+\frac{1}{2}f_4+\frac{1}{3}f_5$ \\ [0.2em] \hline
$W_2$ & $\frac{1}{4}f_1-\frac{1}{2}f_3+\frac{1}{4}f_5$  & 
$-\frac{1}{12}f_1+\frac{1}{2}f_3+\frac{1}{3}f_4+\frac{1}{4}f_5$ \\ [0.2em] \hline
$W_3$ & $\frac{1}{3}f_1-\frac{1}{6}f_2+\frac{1}{6}f_4+\frac{1}{3}f_5$  &
$-\frac{1}{6}f_1+\frac{1}{6}f_2+\frac{1}{6}f_4+\frac{1}{6}f_5$ \\ [0.2em] \hline
$W_4$ & $\frac{1}{4}f_1-\frac{1}{2}f_3+\frac{1}{4}f_5$ & 
$-\frac{1}{4}f_1+\frac{1}{3}f_2-\frac{1}{2}f_3+\frac{1}{12}f_5$ \\ [0.2em] \hline
$W_5$ & $\frac{1}{2}f_2-2f_3-\frac{1}{2}f_4$ &
$-\frac{1}{3}f_1+\frac{1}{2}f_2-f_3-\frac{1}{6}f_4$ \\ [0.2em] \hline
\end{tabular}
\caption{Calculate the sum gradient vector in Fig.~\ref{sf:2} when there is one straggler.}
\label{tb:how}
\end{table}

\subsection{Achievable region with stability constraints}
In the proof of Theorem~\ref{thm:main}, we use Vandermonde matrices and assume that all the computations have infinite precision, which is not possible in real world applications. According to our experimental results, the stability issue of Vandermonde matrices can be ignored up to $n=20$, which covers the regime considered in most related works \cite{Dutta16,Tandon17}. However, beyond that we need to design numerically stable coding schemes and give up the optimal trade-off \eqref{eq:sim} between $d,s$ and $m$.  In this section we find an achievable region for which the condition numbers of all operations in the gradient reconstruction phase are upper bounded by a given value $\kappa$, so that the numerical stability can be guaranteed.
To that end, for any three given integers $n>n_1>n_2$,
we define a function $\gamma(n,n_1,n_2,\kappa)$ to be the smallest integer $n_3$ such that there is an $n_1 \times n$ matrix $V$ satisfying the following two properties:
\begin{enumerate}
\item $n_3\ge n_1$. For every subset $\cF\subseteq[n]$ with cardinality $|\cF|=n_3$, the condition number of $V_\cF V_\cF^T$ is no larger than $\kappa$, where $V_\cF$ is the submatrix of $V$ consisting of columns whose indices are in the set $\cF$.
\item Let $V_{[1:n_2]}$ be the submatrix of $V$ consisting of the first $n_2$ rows of $V$. We require that  every $n_2\times n_2$ submatrix consisting of circulant consecutive\footnote{``circulant consecutive" means that the indices $n$ and $1$ are considered consecutive. A more detailed explanation is given in Section~\ref{Sect:stbl}.} columns of $V_{[1:n_2]}$ is invertible.
\end{enumerate}
Note that  property 1) is similar to the restricted isometry property (RIP) property in compressed sensing \cite{Candes05}. The only difference is that in compressed sensing $n_3<n_1$ while here we require $n_3\ge n_1$.
We point out two obvious properties of this function:
(1) for a fixed triple $(n,n_1,n_2)$, the function $\gamma(n,n_1,n_2,\kappa)$ decreases with $\kappa$; (2) when $\kappa$ is large enough, $\gamma(n,n_1,n_2,\kappa)=n_1$.
We now state the theorem in the case $n=k$ to simplify the notation (see Remark~\ref{rm:n=k}):

\begin{theorem}\label{thm:stb}
Let $\kappa$ be the upper bound on the condition number of all the operations in the gradient reconstruction phase.
A triple $(d,s_\kappa,m)$ is achievable if 
\begin{equation}\label{eq:stb}
s_\kappa \le n -\gamma(n,n-d+m,n-d,\kappa).
\end{equation}
\end{theorem}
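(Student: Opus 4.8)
The plan is to run the achievability scheme of Theorem~\ref{thm:main} at the level of its two matrices $B$ and $V$, but to replace the Vandermonde $V$ by a matrix tailored for numerical stability and to let the master wait for strictly more than the minimum number of workers. Put $n_1:=n-d+m$ and $n_2:=n-d$, so $n_1-n_2=m$, and let $n_3:=\gamma(n,n_1,n_2,\kappa)$; by the definition of $\gamma$ there is an $n_1\times n$ matrix $V$ satisfying properties 1) and 2) with this $n_3$. Use the cyclic assignment of Theorem~\ref{thm:main}: $W_j$ gets the subsets indexed by $\{j,j+1,\dots,j+d-1\}$ modulo $n$, so each worker holds exactly $d$ subsets and the set of workers \emph{not} holding $D_{i'}$ is the circulant-consecutive set $\cC_{i'}:=\{i'+1,\dots,i'+n_2\}$ modulo $n$, of size $n_2$. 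Split each partial gradient into $m$ blocks of size $l/m$, write $g_i=(g_i^{(1)},\dots,g_i^{(m)})$, and stack them into $\mathbf g\in\bR^{mn\times (l/m)}$ with block rows indexed by $(i',q)\in[n]\times[m]$. The encoding matrix will be $BV\in\bR^{mn\times n}$: worker $W_j$ transmits $f_j=(BV)_{:,j}^{T}\mathbf g\in\bR^{l/m}$, which is linear in its assigned partial gradients once $B$ has the support structure below, so the linearity and communication-cost requirements are automatic.

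Next I would build $B$ from this $V$. Fix $i'$ and look at $V_{:,\cC_{i'}}\in\bR^{n_1\times n_2}$. Property 2) says its top $n_2\times n_2$ block is invertible, hence $V_{:,\cC_{i'}}$ has full column rank $n_2$, so $\ker(V_{:,\cC_{i'}}^{T})$ has dimension exactly $n_1-n_2=m$; the same invertibility shows that the coordinate projection onto the last $m$ coordinates is injective on this kernel, hence an isomorphism onto $\bR^m$. Let $c^{(i',1)},\dots,c^{(i',m)}$ be the unique basis of $\ker(V_{:,\cC_{i'}}^{T})$ whose last $m$ coordinates are $e_1,\dots,e_m$, and declare row $(i',q)$ of $B$ to be $(c^{(i',q)})^{T}$. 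Then (a) the last $m$ columns of $B$ are $n$ stacked copies of $I_m$, and (b) $(BV)_{(i',q),j}=(c^{(i',q)})^{T}V_{:,j}=0$ whenever $j\in\cC_{i'}$, i.e.\ whenever $W_j$ does not hold $D_{i'}$; this is exactly the second structural property of $B$, so the scheme respects the computation load $d$.

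For the reconstruction, suppose the master has the outputs of a set $\cF$ with $|\cF|\ge n_3$, and keep only $n_3$ of them. Stacking gives $F_\cF=V_{:,\cF}^{T}(B^{T}\mathbf g)$, with $V_{:,\cF}\in\bR^{n_1\times n_3}$ and $n_3\ge n_1$. By property 1), $V_{:,\cF}V_{:,\cF}^{T}$ is invertible with condition number at most $\kappa$, so $V_{:,\cF}^{T}$ has full column rank and
\[
B^{T}\mathbf g \;=\; \big(V_{:,\cF}V_{:,\cF}^{T}\big)^{-1}V_{:,\cF}\,F_\cF .
\]
The only matrix inversion here is that of the Gram matrix, whose condition number is $\le\kappa$; everything else is a fixed matrix multiplication. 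By structural property (a) the last $m$ block rows of $B^{T}\mathbf g$ are $\sum_{i=1}^{n}g_i^{(1)},\dots,\sum_{i=1}^{n}g_i^{(m)}$, i.e.\ the blocks of $g_1+\dots+g_n$, so the master reads off the sum gradient. Since $s_\kappa\le n-n_3$ guarantees at least $n_3$ responses, and property 1) applies to every $n_3$-subset, all conditions of Definition~\ref{def:m} hold and the reconstruction is $\kappa$-stable.

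I expect the crux to be the second step: showing that property 2) of the $\gamma$-matrix --- invertibility of circulant-consecutive $n_2\times n_2$ submatrices of the \emph{top} $n_2$ rows --- is precisely the hypothesis needed to recover the two structural properties of $B$ when $V$ is no longer Vandermonde, so that the recursive-polynomial construction of Theorem~\ref{thm:main} is replaced by the kernel construction above. The remainder is bookkeeping: fixing $n_1=n-d+m$, $n_2=n-d$, $n_3=\gamma(\cdot)$ so that the sets $\cC_{i'}$ are exactly the ``not-holding-$D_{i'}$'' sets, and checking that the unique ill-conditioned operation in recovery is the Gram-matrix inversion governed by property 1).
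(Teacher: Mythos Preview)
Your proposal is correct and takes essentially the same approach as the paper: the paper also replaces the Vandermonde $V$ by a $\gamma$-matrix, builds the $m$ rows of $B$ associated with each dataset as vectors in the left null space of the corresponding circulant-consecutive column block of $V$ normalized so that the last $m$ coordinates form $I_m$ (giving the explicit formula $B_i=-R_iS_i^{-1}$ in place of your kernel-basis description), and recovers $B^{T}\mathbf g$ via the pseudoinverse $V_{\cF}^{T}(V_{\cF}V_{\cF}^{T})^{-1}$, so that the only inversion in reconstruction is that of the Gram matrix with condition number at most $\kappa$. Your identification of property~2) as exactly the hypothesis needed to normalize the kernel basis is the same observation the paper uses when it invokes the invertibility of $S_i$.
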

The proof of this theorem is given in Section~\ref{Sect:stbl}.
As discussed above, when $\kappa$ is large enough, i.e., when the stability constraint is loose, we have $\gamma(n,n-d+m,n-d,\kappa) = n-d+m$, and \eqref{eq:stb} becomes $s_\kappa \le d-m$, which is the same as \eqref{eq:sim}.
Moreover, since $\gamma(n,n-d+m,n-d,\kappa)$ decreases with $\kappa$, $s_\kappa$ increases with $\kappa$. Namely, we can tolerate more stragglers if we allow less numerical stability.
In our experiments we find that by setting $V$ to be Gaussian random matrix, we can achieve $s_\kappa = d-m$ with numerically stable coding scheme for $n\le 30$, which improves upon the coding scheme based on Vandermonde matrices.

By choosing $V$ as a Gaussian random matrix and
using the classical bounds on eigenvalues of large Wishart matrices\footnote{A Wishart matrix is a matrix of form $AA^T$, where $A$ is a Gaussian random matrix.} \cite{Geman80,Silverstein85} together with the union bound, we can obtain an upper bound of $\gamma(n,n_1,n_2,\kappa)$.
Let us introduce some more definitions to state the upper bound.
Given two integers $n>n_1$, define the function
$$
f_{n,n_1}(x) := \sqrt{\frac{n_1}{x}} + \sqrt{\frac{2n}{x} H(x/n)}
 \text{~for all~} n_1\le x \le n,
$$
where $H$ is the entropy function $H(q):=-q\ln q -(1-q)\ln(1-q)$ defined for $0<q<1$.
It is easy to verify that when $n_1/n>1/2$,  $f_{n,n_1}(x)$ strictly decreases with $x$.
Following the same steps\footnote{There are two differences between the settings in our paper and \cite{Candes05}: First, we have one more condition that certain $n_2\times n_2$ submatrices of $V_{[1:n_2]}$ must be invertible, but this is satisfied with probability $1$ for Gaussian random matrices, so this extra condition makes no difference to the proof, and the bound \eqref{eq:tfy} does not depend on $n_2$.
Second, in our paper we require $n_3\ge n_1$ while in \cite{Candes05} $n_3<n_1$, but this difference can also be resolved by a trivial modification of the proof in \cite{Candes05}.} as in the proof of 
\cite[Lemma 3.1]{Candes05}, we can show that when $n_1/n>1/2$ and $n$ is large,
\begin{equation}\label{eq:tfy}
\gamma(n,n_1,n_2,\kappa) \le f_{n,n_1}^{-1}(\frac{\sqrt{\kappa}-1}{\sqrt{\kappa}+1})
\text{~~~for~} \kappa > \Big( \frac{1+\sqrt{n_1/n}}{1-\sqrt{n_1/n}} \Big)^2 .
\end{equation}
\begin{corollary}
Let $\kappa$ be the upper bound on the condition number of all the operations in the gradient reconstruction phase.
When  $(d-m)/n<1/2$, $\kappa > \Big( \frac{1+\sqrt{n_1/n}}{1-\sqrt{n_1/n}} \Big)^2 $, and $n$ is large enough, a triple $(d,s_\kappa,m)$ is achievable if 
$$
s_\kappa \le n - f_{n,n_1}^{-1} \Big(\frac{\sqrt{\kappa}-1}{\sqrt{\kappa}+1} \Big).
$$
\end{corollary}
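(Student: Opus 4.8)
The plan is to derive the Corollary as an immediate consequence of Theorem~\ref{thm:stb} combined with the bound \eqref{eq:tfy}. First I would make the substitution $n_1 = n-d+m$ and $n_2 = n-d$, so that the quantity $\gamma(n,n-d+m,n-d,\kappa)$ appearing in condition \eqref{eq:stb} of Theorem~\ref{thm:stb} becomes exactly $\gamma(n,n_1,n_2,\kappa)$. Under this substitution the hypothesis $(d-m)/n < 1/2$ is equivalent to $n_1/n = (n-d+m)/n > 1/2$, which is precisely the regime in which $f_{n,n_1}$ is strictly decreasing on $[n_1,n]$ and in which \eqref{eq:tfy} applies; similarly the hypothesis $\kappa > \big((1+\sqrt{n_1/n})/(1-\sqrt{n_1/n})\big)^2$ is exactly the lower bound on $\kappa$ required by \eqref{eq:tfy}.

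Next I would invoke \eqref{eq:tfy} directly: for $n$ large enough,
\[
\gamma(n,n_1,n_2,\kappa) \le f_{n,n_1}^{-1}\Big(\frac{\sqrt{\kappa}-1}{\sqrt{\kappa}+1}\Big),
\]
where $f_{n,n_1}^{-1}$ is well defined because $f_{n,n_1}$ is strictly monotone on $[n_1,n]$ when $n_1/n>1/2$, and the argument $(\sqrt{\kappa}-1)/(\sqrt{\kappa}+1)$ lies in the range of $f_{n,n_1}$ thanks to the stated lower bound on $\kappa$. Rearranging the displayed inequality gives $n - \gamma(n,n_1,n_2,\kappa) \ge n - f_{n,n_1}^{-1}\big((\sqrt{\kappa}-1)/(\sqrt{\kappa}+1)\big)$.

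Finally I would chain the inequalities: if $s_\kappa \le n - f_{n,n_1}^{-1}\big((\sqrt{\kappa}-1)/(\sqrt{\kappa}+1)\big)$, then by the previous step $s_\kappa \le n - \gamma(n,n-d+m,n-d,\kappa)$, which is precisely condition \eqref{eq:stb}; hence Theorem~\ref{thm:stb} guarantees that the triple $(d,s_\kappa,m)$ is achievable, proving the Corollary.

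There is essentially no obstacle at the level of this Corollary — it is a routine specialization of Theorem~\ref{thm:stb} together with \eqref{eq:tfy}. The genuine work lies upstream, in establishing \eqref{eq:tfy} itself: transferring the classical eigenvalue concentration estimates for Wishart matrices \cite{Geman80,Silverstein85} through a union bound over the $\binom{n}{n_3}$ column subsets, following the proof of \cite[Lemma 3.1]{Candes05}, while handling the two discrepancies noted in the footnote (the probability-one invertibility of the circulant-consecutive submatrices of $V_{[1:n_2]}$, and the reversed inequality $n_3 \ge n_1$ as compared with the compressed-sensing setting). Once \eqref{eq:tfy} is in hand, the Corollary follows in a few lines as above.
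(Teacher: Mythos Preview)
Your proposal is correct and matches the paper's approach: the Corollary is stated immediately after \eqref{eq:tfy} with no separate proof, being treated as an immediate consequence of Theorem~\ref{thm:stb} combined with \eqref{eq:tfy} under the substitution $n_1=n-d+m$, $n_2=n-d$. Your write-up simply makes explicit the two-line argument the paper leaves implicit.
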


\section{Coding Scheme}\label{sect:code}
In this section, we present a coding scheme achieving \eqref{eq:sim} with equality, i.e., the parameters in our scheme satisfy $d=s+m$.
First we introduce two binary operations $\oplus$ and $\ominus$ over the set $[n]$.
For $a,b\in[n]$, define
$$
a\oplus b:=\left\{\begin{array}{cc} a+b & \text{~if~} a+b\le n \\
a+b-n & \text{~if~} a+b>n \end{array} \right.,
\quad
a\ominus b:=\left\{\begin{array}{cc} a-b & \text{~if~} a-b\ge 1 \\
a-b+n & \text{~if~} a-b\le 0 \end{array} \right. .
$$
In our scheme, each worker $W_i$ is assigned with $d$ data subsets $D_{i},D_{i\oplus 1},D_{i\oplus 2},\dots,D_{i\oplus (d-1)}$.
This is equivalent to say that each data subset $D_i$ is assigned to $d$ workers $W_{i},W_{i\ominus 1},W_{i\ominus 2},\dots,W_{i\ominus (d-1)}$.

\subsection{Proof of achievability part of Theorem~\ref{thm:main}}\label{sect:achieve}

Let $\theta_1,\theta_2,\dots,\theta_n$ be $n$ distinct real numbers.
Define $n$ polynomials $p_i,i\in[n]$,
\begin{equation}\label{eq:dpi}
p_i(x)=\prod_{j=1}^{n-d}(x-\theta_{i\oplus j}).
\end{equation}
Before proceeding further, let us explain the meaning of $\theta_i$ and $p_i$. Each $\theta_i$ is associated with the worker $W_i$, and each $p_i$ is associated with the dataset $D_i$. In our scheme, $p_j(\theta_i)\neq 0$ means that worker $W_i$ needs the value of $g_j$ to calculate $f_i(g_{i_1},g_{i_2},\dots,g_{i_d})$, and therefore $D_j$ is assigned to $W_i$. On the other hand, $p_j(\theta_i)= 0$ means that $D_j$ is not assigned to $W_i$.
By \eqref{eq:dpi}, we can see that each dataset $D_i$ is NOT assigned to $W_{i\oplus 1},W_{i\oplus 2},\dots,W_{i\oplus (n-d)}$.

Next we construct an $(mn)\times (n-s)$ matrix $B=(b_{ij})$ from the polynomials $p_i,i\in[n]$ defined in \eqref{eq:dpi}. 
Let $p_{i,j},j=0,1,\dots,n-s-1$ be the coefficients of the polynomial $p_i$, i.e.,
$$
p_i(x)=\sum_{j=0}^{n-s-1} p_{i,j}x^j.
$$
Since $\deg(p_i)=n-d$ and $d=s+m\ge s+1$, we have $p_{i,n-d}=1$ and $p_{i,n-d+1}=p_{i,n-d+2}=\dots=p_{i,n-s-1}=0$.
For every $i\in[n]$, we define $m$ polynomials $p_i^{(1)},p_i^{(2)},\dots,p_i^{(m)}$ recursively:
\begin{equation}\label{eq:rc}
\begin{aligned}
p_i^{(1)}(x) & := p_i(x),\\
p_i^{(u)}(x) & :=x p_i^{(u-1)}(x) - p_{i,n-d-1}^{(u-1)} p_i^{(1)}(x), \quad u=2,3,\dots,m,
\end{aligned}
\end{equation}
where $p_{i,j}^{(u)},j=0,1,\dots,n-s-1$ are the coefficients of $p_i^{(u)}$, i.e.,
$p_i^{(u)}(x)=\sum_{j=0}^{n-s-1} p_{i,j}^{(u)}x^j$.
Clearly, $p_i^{(u)}$ is a polynomial of degree $\deg(p_i^{(u)})=n-d+u-1$, and its leading coefficient is $1$, i.e.,
\begin{equation}\label{eq:b1}
\begin{aligned}
p_{i,n-d+u-1}^{(u)}=1 & \text{~for~} u=1,2,\dots,m,\\
p_{i,n-d+u}^{(u)}=p_{i,n-d+u+1}^{(u)}=\dots=p_{i,n-s-1}^{(u)}=0 & \text{~for~} u=1,2,\dots,m-1.
\end{aligned}
\end{equation}
It is also clear that $p_i | p_i^{(u)}$ for all $u\in[m]$ and all $i\in[n]$, so we have
$$
p_i^{(u)}(\theta_{i\oplus 1})=p_i^{(u)}(\theta_{i\oplus 2})=\dots=p_i^{(u)}(\theta_{i\oplus (n-d)})=0
\text{~for all~} u\in[m] \text{~and all~} i\in[n],
$$
which is equivalent to
\begin{equation}\label{eq:b2}
p_{i\ominus 1}^{(u)}(\theta_i)=p_{i\ominus 2}^{(u)}(\theta_i)=\dots=p_{i\ominus (n-d)}^{(u)}(\theta_i)=0 \text{~for all~} u\in[m] \text{~and all~} i\in[n].
\end{equation}
By a simple induction on $u$, one can further see that 
\begin{equation}\label{eq:b3}
p_{i,n-d}^{(u)}=p_{i,n-d+1}^{(u)}=\dots=p_{i,n-d+u-2}^{(u)}=0 \text{~for~} u=2,3,\dots,m.
\end{equation}
We can now specify the entries of $B$ as follows:
\begin{equation}\label{eq:db}
b_{(i-1)m+u,j}=p_{i,j-1}^{(u)} \text{~for all~}i\in[n],u\in[m],j\in\{1,2,\dots,n-s\}.
\end{equation}
By this definition, the following identity holds for every $x\in\mathbb{R}$:
{\footnotesize
\begin{equation}\label{eq:cw}
\begin{aligned}
& B [\begin{array}{ccccc} 1 & x & x^2 & \dots & x^{n-s-1} \end{array}]^T=\\
& [\begin{array}{*{13}c}
p_1^{(1)}(x) & p_1^{(2)}(x) & \dots & p_1^{(m)}(x) &
p_2^{(1)}(x) & p_2^{(2)}(x) & \dots & p_2^{(m)}(x) &
\dots\dots &
p_n^{(1)}(x) & p_n^{(2)}(x) & \dots & p_n^{(m)}(x) \end{array}]^T.
\end{aligned}
\end{equation}
}
Moreover, according to \eqref{eq:b1} and \eqref{eq:b3}, the submatrix $B_{[(n-d+1):(n-s)]}$ consisting of the last $m$ columns of $B$ is
\begin{equation}\label{eq:im}
B_{[(n-d+1):(n-s)]}=[\begin{array}{cccc} I_m & I_m & \dots & I_m \end{array}]^T,
\end{equation}
where $I_m$ is the $m\times m$ identity matrix, and there are $n$ identity matrix on the right-hand side of \eqref{eq:im}.

Recall that we assume $m|l$ throughout the paper. 
For every $v=0,1,\dots,l/m-1$ and $j\in[n]$, define an $m$-dimensional vector 
$$
y_v^{(j)}:=[\begin{array}{cccc}g_j(vm) & g_j(vm+1) & \dots & g_j(vm+m-1)\end{array}].
$$
For every $v=0,1,\dots,l/m-1$, define an $(mn)$-dimensional vector 
\begin{equation}\label{eq:dzv}
z_v := [\begin{array}{cccc}
y_v^{(1)} & y_v^{(2)} & \dots & y_v^{(n)} \end{array}].
\end{equation}
According to \eqref{eq:cw}, 
\begin{equation}\label{eq:ind}
z_v B [\begin{array}{ccccc} 1 & \theta_i & \theta_i^2 & \dots & \theta_i^{n-s-1} \end{array}]^T
=\sum_{j=1}^n \sum_{u=1}^m p_j^{(u)}(\theta_i) g_j(vm+u-1)
=\sum_{j=0}^{d-1} \sum_{u=1}^m p_{i\oplus j}^{(u)}(\theta_i) g_{i\oplus j}(vm+u-1),
\end{equation}
where the second equality follows from \eqref{eq:b2}.

Now we are ready to define the transmitted vector $f_i(g_i,g_{i\oplus 1},\dots,g_{i\oplus (d-1)})$ for each worker $W_i,i\in[n]$:
\begin{equation}\label{eq:df}
f_i(g_i,g_{i\oplus 1},\dots,g_{i\oplus (d-1)}):=  \left[
\begin{array}{c} z_0 \\ z_1 \\ \vdots \\ z_{l/m-1} \end{array}\right]
B [\begin{array}{ccccc} 1 & \theta_i & \theta_i^2 & \dots & \theta_i^{n-s-1} \end{array}]^T .
\end{equation}
By \eqref{eq:ind}, the value of $f_i(g_i,g_{i\oplus 1},\dots,g_{i\oplus (d-1)})$ indeed only depends on the values of $g_i,g_{i\oplus 1},\dots,g_{i\oplus (d-1)}$.

To complete the description of our coding scheme, we only need to show that for any subset $\cF\subseteq[n]$ with cardinality $|\cF|=n-s$, we can calculate 
$g_1+g_2+\dots+g_n$ from $\{f_i(g_i,g_{i\oplus 1},\dots,g_{i\oplus (d-1)}):i\in\cF\}$.
Let the column vectors $\{e_1,e_2,\dots,e_{n-s}\}$ be the standard basis of $\mathbb{R}^{n-s}$, i.e., all coordinates of $e_i$ are $0$ except the $i$th coordinate which is $1$.
By \eqref{eq:im}, we have
\begin{equation}\label{eq:ze}
z_v B e_{n-d+u} = \sum_{j=1}^n g_j(vm+u-1) \text{~for all~} 0\le v \le l/m-1 \text{~and all~} u\in[m].
\end{equation}
Without loss of generality let us assume that $\cF=\{1,2,\dots,n-s\}$.
Define the following $(n-s)\times (n-s)$ matrix
\begin{equation}\label{eq:vd}
A:=
\left[\begin{array}{cccc} 1 & 1 & \dots & 1 \\
\theta_1 & \theta_2 & \dots & \theta_{n-s} \\
\theta_1^2 & \theta_2^2 & \dots & \theta_{n-s}^2 \\
\vdots & \vdots & \vdots & \vdots \\
\theta_1^{n-s-1} & \theta_2^{n-s-1} & \dots & \theta_{n-s}^{n-s-1} \end{array} \right].
\end{equation}
 According to \eqref{eq:df}, from $\{f_i(g_i,g_{i\oplus 1},\dots,g_{i\oplus (d-1)}):i\in\cF\}$ we can obtain  the values of
\begin{equation}\label{eq:vt}
z_v B A
\text{~for all~} 0\le v \le l/m-1.
\end{equation}
Since $A$
is invertible, we can calculate $z_v B e_{n-d+u}$ for all $0\le v \le l/m-1$ and all $u\in[m]$ from the vectors in \eqref{eq:vt} by multiplying $A^{-1} e_{n-d+u}$ to the right.
By \eqref{eq:ze}, 
\begin{align*}
\{z_v B e_{n-d+u}:v\in\{0,1,\dots,l/m-1\},u\in[m]\}
& = \{\sum_{j=1}^n g_j(vm+u-1) : v\in\{0,1,\dots,l/m-1\},u\in[m]\} \\
& = \{\sum_{j=1}^n g_j(i):i\in\{0,1,\dots,l-1\}\}.
\end{align*}
Therefore we conclude that the sum vector $g_1+g_2+\dots+g_n$ can be calculated from $\{f_i(g_i,g_{i\oplus 1},\dots,g_{i\oplus (d-1)}):i\in\cF\}$ whenever $|\cF|=n-s$.
Thus we have shown that our coding scheme satisfies all three conditions in Definition~\ref{def:m}, and this completes the proof of the achievability part of Theorem~\ref{thm:main}.

\subsection{Efficient implementation of our coding scheme}
To implement our coding scheme,
an important step is to calculate the product 
$$
B [\begin{array}{ccccc} 1 & \theta_i & \theta_i^2 & \dots & \theta_i^{n-s-1} \end{array}]^T
$$
in order to obtain the transmitted vectors in \eqref{eq:df}. According to \eqref{eq:cw}, this product can be easily calculated by the recursive procedure \eqref{eq:rc}.
Notice that in this recursive procedure we need to know the values of $p_{i,n-d-1}^{(u-1)}$ for $u=2,3,\dots,m$, and by \eqref{eq:db} we have
$b_{(i-1)m+u,n-d}=p_{i,n-d-1}^{(u)}$. Therefore in our implementation we need to calculate (at least some of) the entries of the matrix $B$.
The entries of $B$ are specified in \eqref{eq:db}, and they are calculated recursively according to \eqref{eq:rc} from the coefficients of the polynomials $p_i,i\in[n]$.
While the recursive procedure in \eqref{eq:rc} might seem complicated, Algorithm~\ref{alg:B} below describes an efficient way to calculate $B$ from the coefficients of $p_i,i\in[n]$ defined in \eqref{eq:dpi}.

Finally, we remark that the examples in Fig.~\ref{sf:1} and Fig.~\ref{sf:2} are both obtained by setting $\theta_1=-2,\theta_2=-1,\theta_3=0,\theta_4=1,\theta_5=2$ in our coding scheme.

\begin{algorithm}
\caption{Algorithm to calculate the entries of $B$}\label{alg:B}
\textbf{Input:} $p_{i,j},i\in[n],j=0,1,\dots,n-d$, the coefficients of $p_i,i\in[n]$, i.e.,
$p_i(x)=\sum_{j=0}^{n-d} p_{i,j}x^j$.

\textbf{Output:} The $(mn)\times (n-s)$ matrix $B=(b_{ij})_{1\le i\le mn,1\le j\le n-s}$

\vspace*{0.05in}
\begin{algorithmic}
\State Initialize $B$ as a zero matrix
\For {$i=1,2,\dots,n$} 
\For {$j=1,2,\dots,n-d+1$}
\State $b_{(i-1)m+1,j} \gets p_{i,j-1}$
\EndFor
\EndFor
\For {$u=2,3,\dots,m$}
\For {$i=1,2,\dots,n$} 
\For {$j=2,3,\dots,n-d+u$}
\State $b_{(i-1)m+u,j} \gets b_{(i-1)m+u-1,j-1}$
\EndFor
\For {$j=1,2,\dots,n-d+1$}
\State $b_{(i-1)m+u,j} \gets b_{(i-1)m+u,j}-b_{(i-1)m+u,n-d+1} b_{(i-1)m+1,j}$
\EndFor
\EndFor
\EndFor
\State \textbf{return} $B$
\end{algorithmic}
\end{algorithm}

\subsection{Choice of $\{\theta_1,\theta_2,\dots,\theta_n\}$ and numerical stability}
In Section~\ref{sect:achieve} we have shown that our coding scheme works for any set of $n$ distinct real numbers $\{\theta_1,\theta_2,\dots,\theta_n\}$. However, in the proof we assume that the computation has infinite precision, which is not possible in real world application.
Stability aspects need to be considered for the inversion of Vandermonde matrices of form \eqref{eq:vd} when the master node reconstructs the full gradient vector from partial gradient vectors returned by the first $(n-s)$ worker nodes. It is well known that the accuracy of matrix inversion depends heavily on the condition number of the matrix. Therefore we need to find a set of $\{\theta_1,\theta_2,\dots,\theta_n\}$ such that every $(n-s)\times (n-s)$ submatrix of the following matrix $V$ has low condition number.
\begin{equation}\label{eq:defv}
V:=
\left[\begin{array}{ccccc} 1 & 1 & 1 & \dots & 1 \\
\theta_1 & \theta_2 & \theta_3 & \dots & \theta_n \\
\theta_1^2 & \theta_2^2 & \theta_3^2 & \dots & \theta_n^2 \\
\vdots & \vdots & \vdots & \vdots & \vdots \\
\theta_1^{n-s-1} & \theta_2^{n-s-1} & \theta_3^{n-s-1} & \dots & \theta_n^{n-s-1} \end{array} \right].
\end{equation}
In our implementation in Section~\ref{exp1}, we choose 
\begin{equation}\label{eq:tt}
\{\theta_1,\theta_2,\dots,\theta_n\} =
\left\{\begin{array}{cc} \{\pm(1+i/2), i=0,1,2,\dots,n/2-1 \} & \mbox{for even } n \\
\{0,\pm(1+i/2), i=0,1,2,\dots,(n-1)/2-1 \} & \mbox{for odd } n \end{array}\right.   .
\end{equation}
We test this choice for various values of $n$, and we find that when $n\le 20$, our scheme is numerically stable for all possible values of $d,s$ and $m$. More specifically, the relative error (measured in $\ell_\infty$ norm) between reconstructed full gradient vector at the master node and the true value is less than $0.2\%$.
However, the numerical stability deteriorates very quickly as $n$ becomes larger than $20$: when $n=23$, the relative error in the worst case can be up to $80\%$, and when $n=26$, our algorithm crushes.

Note that numerical instability of our coding scheme is NOT due to the introduction of the communication cost reduction factor $m$. In fact, in \cite{Halbawi17,Raviv17} the authors presented coding schemes to achieve \eqref{eq:sim} for the special case of $m=1$, and the schemes in both paper also involve inversion of Vandermonde matrices, so they also suffer from numerical instability.
Moreover, the schemes in both paper set $\theta_1,\theta_2,\dots,\theta_n$ to be roots of unity. Such a choice does not resolve the numerical instability issue either: it is shown in \cite{Pan16} that in the worst case the condition number of $(n-s)\times(n-s)$ submatrices of $V$ grows exponentially fast in $n$ when $\theta_1,\theta_2,\dots,\theta_n$ are roots of unity.

\section{Proof of Theorem~\ref{thm:stb}} \label{Sect:stbl}
Let $s:=d-m$ and
let $s_\kappa = n-\gamma(n,n-s,n-d,\kappa).$ Then by definition of the function $\gamma$, there is an $(n-s) \times n$ matrix $V$ such that
\begin{enumerate}
\item $n- s_\kappa \ge n-s$.
For every subset $\cF\subseteq[n]$ with cardinality $|\cF|=n-s_\kappa$, the condition number of $V_\cF V_\cF^T$ is no larger than $\kappa$, where $V_\cF$ is the submatrix of $V$ consisting of columns whose indices are in the set $\cF$.
\item Define $n$ submatrices of $V$ with size $(n-d)\times (n-d)$ as follows: For $i=1,2,\dots,d+1$,
define $S_i$ to be the submatrix of $V$ corresponding to the  row indices $\{1,2,\dots,n-d\}$ and the column indices $\{i,i+1,\dots,i+n-d-1\}$. For $i=d+2,d+3,\dots,n$, define $S_i$ to be the submatrix of $V$ corresponding to the row indices $\{1,2,\dots,n-d\}$ and the column indices $\{i,i+1,\dots,n,1,2,\dots,i-d-1\}$. The matrix $S_i$ is invertible for all $i\in[n]$.
\end{enumerate}

We further define another $n$ submatrices of $V$ with size $m\times (n-d)$ as follows: For $i=1,2,\dots,d+1$,
define $R_i$ to be the submatrix of $V$ corresponding to the  row indices $\{n-d+1,n-d+2,\dots,n-s\}$ and the column indices $\{i,i+1,\dots,i+n-d-1\}$. For $i=d+2,d+3,\dots,n$, define $R_i$ to be the submatrix of $V$ corresponding to the row indices $\{n-d+1,n-d+2,\dots,n-s\}$ and the column indices $\{i,i+1,\dots,n,1,2,\dots,i-d-1\}$.

To prove Theorem~\ref{thm:stb}, we only need to find an $(mn)\times (n-s)$ matrix $B$ satisfying the following two conditions:
\begin{enumerate}
\item The product of the $i$th row of $B$ and the $j$th column of $V$ is $0$ for all $j\in[n]$ and all $i\in[mn]\setminus\{(mj)\text{mod} (mn)+1,(mj+1)\text{mod} (mn)+1,(mj+2)\text{mod} (mn)+1,\dots,(mj+dm-1)\text{mod} (mn)+1\}$; 
\item Equation \eqref{eq:im}.
\end{enumerate}
Since Equation \eqref{eq:im} already specifies the last $m$ columns of matrix $B$, we only need to design the first $(n-d)$ columns.
For $i\in[n]$, we write $B_i$ the $m\times(n-d)$ submatrix of $B$ corresponding to row indices $\{(i-1)m+1,(i-1)m+2,\dots,im\}$ and column indices $\{1,2,\dots,n-d\}$. 
Now the condition 1) above is equivalent to 
\begin{equation}\label{eq:nuh}
[B_i \quad I_m] [S_i^T \quad R_i^T]^T = 0 \text{~for all~} i\in[n].
\end{equation}
Since $S_i$ is invertible for all $i\in[n]$, the equation above is equivalent to
$$
[B_i \quad I_m] [I_{n-d} \quad (R_i S_i^{-1})^T]^T = 0 \text{~for all~} i\in[n].
$$
It is easy to see that we can set $B_i:=-R_i S_i^{-1}$ for all $i\in[n]$ to satisfy this constraint. 
As a result, the matrix $B$ in our coding scheme is
$$
B:= \left[\begin{array}{ccccc}
(-R_1 S_1^{-1})^T & (-R_2 S_2^{-1})^T  & (-R_3 S_3^{-1})^T  & \dots & (-R_n S_n^{-1})^T \\
I_m & I_m & I_m & \dots & I_m
 \end{array} \right]^T,
$$
where the matrices $R_i,S_i,i\in[n]$ are defined above as certain submatrices of the matrix $V$.

 Denote $V_i$ as the $i$th column of $V$. Recall the definition of $z_v,v=0,1,\dots,l/m-1$ in \eqref{eq:dzv}.
Now we are ready to define the transmitted vector $f_i(g_i,g_{i\oplus 1},\dots,g_{i\oplus (d-1)})$ for each worker $W_i,i\in[n]$:
\begin{equation}\label{eq:lpl}
f_i(g_i,g_{i\oplus 1},\dots,g_{i\oplus (d-1)}):=  \left[
\begin{array}{c} z_0 \\ z_1 \\ \vdots \\ z_{l/m-1} \end{array}\right]
B V_i .
\end{equation}
By \eqref{eq:nuh}, the value of $f_i(g_i,g_{i\oplus 1},\dots,g_{i\oplus (d-1)})$ indeed only depends on the values of $g_i,g_{i\oplus 1},\dots,g_{i\oplus (d-1)}$.

To complete the description of our coding scheme, we only need to show that for any subset $\cF\subseteq[n]$ with cardinality $|\cF|=n-s_\kappa$, we can calculate 
$g_1+g_2+\dots+g_n$ from $\{f_i(g_i,g_{i\oplus 1},\dots,g_{i\oplus (d-1)}):i\in\cF\}$, and the condition numbers of all operations in the gradient reconstruction phase are upper bounded by $\kappa$.
Let the column vectors $\{e_1,e_2,\dots,e_{n-s}\}$ be the standard basis of $\mathbb{R}^{n-s}$.
 According to \eqref{eq:lpl}, from $\{f_i(g_i,g_{i\oplus 1},\dots,g_{i\oplus (d-1)}):i\in\cF\}$ we can obtain  the values of
\begin{equation}\label{eq:cyu}
z_v B V_\cF
\text{~for all~} 0\le v \le l/m-1.
\end{equation}
Similarly to the coding scheme in Section~\ref{sect:achieve}, we can calculate $z_v B e_{n-d+u}$ for all $0\le v \le l/m-1$ and all $u\in[m]$ from the vectors in \eqref{eq:cyu} by multiplying $V_\cF^T (V_\cF V_\cF^T)^{-1} e_{n-d+u}$ to the right.
Therefore we conclude that the sum vector $g_1+g_2+\dots+g_n$ can be calculated from $\{f_i(g_i,g_{i\oplus 1},\dots,g_{i\oplus (d-1)}):i\in\cF\}$ whenever $|\cF|=n-s_\kappa$.
Thus we have shown that our coding scheme satisfies all three conditions in Definition~\ref{def:m}. Moreover, the only matrix inversions in the gradient reconstruction phase is calculating $(V_\cF V_\cF^T)^{-1}$. By definition of the matrix $V$, the condition numbers of  $\{V_\cF V_\cF^T:\cF\subseteq[n],|\cF|=n-s_\kappa\}$ are all upper bounded by $\kappa$. This completes the proof of Theorem~\ref{thm:stb}.

As a final remark, we do not impose any stability constraints on the calculation of $S_i^{-1}$ when constructing the matrix $B$, which is also a matrix inversion. This is because the construction of $B$ is only one-time, so we can afford to use high-precision calculation to compensate for possibly large condition number in the construction of matrix $B$.

\subsection{Choice of the matrix $V$}
In Section~\ref{sect:code} we set $V$ to be a (non-square) Vandermonde matrix.
However, it is well known that Vandermonde matrices are badly ill-conditioned \cite{Gautschi87},
so one way to alleviate the numerical instability is to use random matrices instead of Vandermonde matrices. 
For instance, we can choose $V$ in \eqref{eq:defv} to be a Gaussian random matrix and design the matrix $B$ as described above. 
According to our experimental results, using random matrices allows our scheme to be numerically stable for all $n\le 30$ and all possible values of $d,s$ and $m$.


\section{Experiments on Amazon EC2 clusters}\label{exp1}

In this section, we use our proposed gradient coding scheme to train a logistic regression model on the Amazon Employee Access dataset from Kaggle\footnote{https://www.kaggle.com/c/amazon-employee-access-challenge}, and we compare the running time and Generalization AUC\footnote{AUC is short for area under the ROC-curve. The Generalization AUC can be efficiently calculated using the ``sklearn.metrics.auc" function in Python.} between our method and baseline approaches. More specifically, we compare our scheme against: (1) the {\em naive} scheme, where the data is uniformly divided among all workers without replication and the master node waits for all workers to send their results before updating model parameters in each iteration, and (2) the coding schemes in \cite{Tandon17,Halbawi17,Raviv17}, i.e., the special case of $m=1$ in our scheme.
Note that in \cite{Tandon17} the authors implemented their methods (which is the special case of $m=1$ in this paper) to train the same model over the same dataset.

We used Python with {\fontfamily{qcr}\selectfont mpi4py} package to implement our gradient coding schemes proposed in Section~\ref{sect:achieve}, where $\theta_1,\theta_2,\dots,\theta_n$ are specified in \eqref{eq:tt}.
We used {\fontfamily{qcr}\selectfont t2.micro} instances on Amazon EC2 as worker nodes and a single {\fontfamily{qcr}\selectfont c3.8xlarge} instance as the master node.

As a common preprocessing step, we converted the categorical features in the Amazon Employee Access dataset to binary features by one-hot encoding, which can be easily realized in Python.
After one-hot encoding with interaction terms, the dimension of parameters in our model is $l=343474$.
For all three schemes (our proposed scheme, the schemes in \cite{Tandon17,Halbawi17,Raviv17} and the naive scheme), we used $N=26220$ training samples and adopted Nesterov's Accelerated Gradient (NAG) descent \cite[Section 3.7]{Bubeck15} to train the model.
These experiments were run on $n=10,15,20$ worker nodes.

In Fig.~\ref{fig:tpi}, we compare average running time per iteration for different schemes. For coding schemes proposed in \cite{Tandon17,Halbawi17,Raviv17}, i.e., coding schemes corresponding to $m=1$ in our paper, we choose the optimal value of $s$ such that it has the smallest running time among all possible choices of $(m=1,s)$. For coding schemes proposed in this paper, i.e., schemes with $m>1$, we choose two pairs of $(m,s)$ with the smallest running time among all possible choices.
We can see that for all three choices of $n$, our scheme outperforms the schemes in \cite{Tandon17,Halbawi17,Raviv17} by at least $23\%$ and outperforms the naive scheme by at least $32\%$.
We then plot generalization AUC vs. running time for these choices of $(m,s)$ in Fig.~\ref{fig:avt}. The curves corresponding to $m>1$ are always on the left side of the curves corresponding to $m=1$ and the naive scheme, which means that our schemes achieve the target generalization error much faster than the other two schemes.

\begin{figure}[h] 
\includegraphics[width=\textwidth]{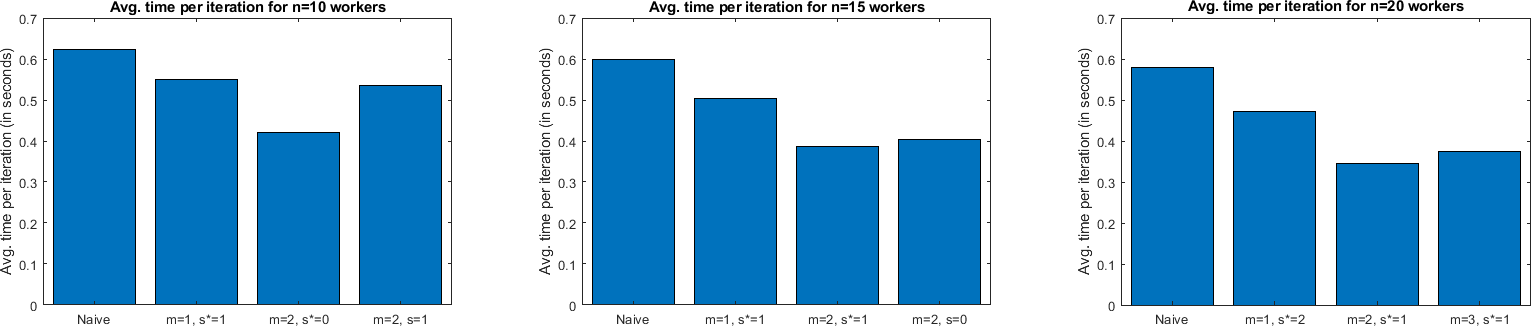}
\caption{Avg. time per iteration for $n=10,15,20$ workers, $s^*$ means that it is the optimal value of $s$ for that choice of $m$} \label{fig:tpi}
\end{figure}

\begin{figure}[h]
\includegraphics[width=\textwidth]{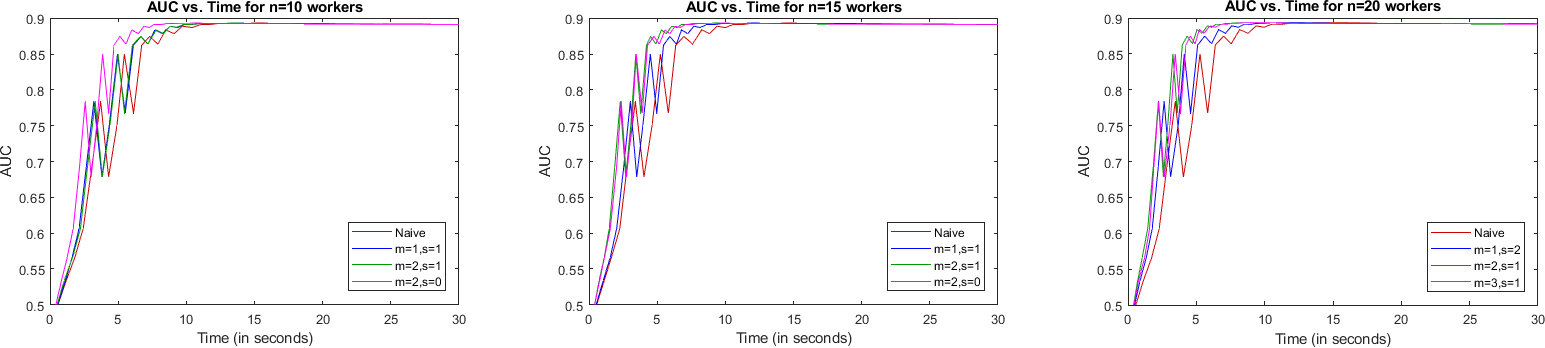}
\caption{AUC vs. Time for $n=10,15,20$ workers. The curves corresponding to $m>1$ are always on the left side of the curves corresponding to $m=1$ and the naive scheme, which means that our schemes achieve the target generalization error much faster than the other schemes.}
 \label{fig:avt}
\end{figure}

\section{Analysis of the total computation and communication time}\label{ana1}
In this section we analyze the total runtime of our coding scheme for different choices of the design parameters $(d,s,m)$ based on a probabilistic model. Our analysis reveals the optimal choice of parameters under some special cases and also sheds light on how to choose $(d,s,m)$ in general.
Following the probabilistic model of runtime in \cite{Lee16}, we assume that both computation time and communication time have shifted exponential distribution, which is the sum of a constant and an exponential random variable.
We also assume that for each worker, the computation time is proportional to $d$, the number of assigned data subsets,
and the communication time is proportional to the dimension of transmitted vector. This assumption is in accordance to the observation in the experiments of \cite{Tandon17}.
The total runtime is the sum of the computation time and the communication time.\footnote{Since the total number of samples $N$ in large-scale machine learning tasks is of order hundreds of millions, we have $N \gg n$ in our problem. The computation time is of order $\Theta(Nl)$ while the reconstruction time is of order $O(n l)$. Therefore we can ignore the reconstruction phase at the master node when estimating the total runtime.}

Formally speaking, for $i,j\in[n]$,
let $T_{i,j}^{(1)}$ be the computation time of data subset $D_j$ for worker $W_i$.
Similarly, for $i\in[n]$, let $T_{i}^{(2)}$ be the communication time for worker $W_i$ to send a vector of dimension $l$.
We make the following assumption:
\begin{enumerate}
\item For $i\in[n]$, $T_{i,1}^{(1)}=T_{i,2}^{(1)}=\dots=T_{i,n}^{(1)}=T_i^{(1)}$, where
the random variables $T_i^{(1)},i\in[n]$ are i.i.d. with distribution
$$
\Pr(T_i^{(1)}\le t)=1-e^{-\lambda_1(t-t_1)}, \forall t\ge t_1.
$$
\item The communication time for worker $W_i$ to send a vector of dimension $l'$ is $(l'/l)T_i^{(2)}$,
where the random variables $T_i^{(2)},i\in[n]$ are i.i.d. with distribution
$$
\Pr(T_i^{(2)}\le t)=1-e^{-\lambda_2(t-t_2)}, \forall t\ge t_2.
$$
\item The random variables $T_i^{(1)},i\in[n]$ and $T_i^{(2)},i\in[n]$ are mutually independent.
\end{enumerate}
Here $t_1$ and $t_2$ are the minimum computation and communication time of a worker in perfect conditions, respectively; $\lambda_1$ and $\lambda_2$ depict the straggling behavior in the computation and communication process, respectively.
It is clear that smaller $\lambda_1$ means the distribution of the computation time has a heavy tail and more likely to cause delay. Similarly, smaller $\lambda_2$ means that the communication process is more likely to be the bottleneck.

Under the assumptions above, for a triple $(d,s,m)$, the computation time of worker $W_i$ is $dT_i^{(1)}$, which is the sum of the constant $dt_1$ and an exponential random variable with distribution $\Exp(\lambda_1/d)$, and the communication time of worker $W_i$ is $\frac{1}{m}T_i^{(2)}$,
which is the sum of the constant $t_2/m$ and an exponential random variable with distribution
$\Exp(m\lambda_2)$.
Therefore, the total runtime for each worker $W_i$ is the sum of $dt_1+t_2/m$ and a random variable 
$T_i^{(d,m)}$ with distribution\footnote{\eqref{eq:dm} gives the expression when $\lambda_1/d \neq m\lambda_2$. When $\lambda_1/d = m\lambda_2$, $T_i^{(d,m)}$ is an Erlang random variable with parameters $2$ and $m\lambda_2$.}
\begin{equation}\label{eq:dm}
\Pr(T_i^{(d,m)}\le t)=1-\frac{\lambda_1/d}{\lambda_1/d-m\lambda_2}e^{-m\lambda_2 t}
-\frac{m\lambda_2}{m\lambda_2-\lambda_1/d}e^{-(\lambda_1/d) t},\quad \forall t\ge 0.
\end{equation}
Since $T_i^{(d,m)},i\in[n]$ are i.i.d. and we only need to wait for the first $n-s$ workers to return their results, the total runtime of the whole task is
\begin{equation}\label{eq:tot}
T_{\tot}=dt_1+t_2/m+T_{d,s,m},
\end{equation}
 where the random variable $T_{d,s,m}$ has distribution
\begin{equation}\label{eq:hard}
\begin{aligned}
\Pr(T_{d,s,m} \le t)= & \int_0^t \frac{n!}{(n-s-1)!s!} \frac{m\lambda_1\lambda_2}{\lambda_1-dm\lambda_2}
\Big( 1-\frac{\lambda_1/d}{\lambda_1/d-m\lambda_2}e^{-m\lambda_2 \tau}
-\frac{m\lambda_2}{m\lambda_2-\lambda_1/d}e^{-(\lambda_1/d) \tau} \Big)^{n-s-1} \\
& \Big(\frac{\lambda_1/d}{\lambda_1/d-m\lambda_2}e^{-m\lambda_2 \tau}
+ \frac{m\lambda_2}{m\lambda_2-\lambda_1/d}e^{-(\lambda_1/d) \tau} \Big)^s
\Big( e^{-m\lambda_2 \tau}  - e^{-(\lambda_1/d) \tau} \Big) d \tau, \quad \forall t\ge 0.
\end{aligned}
\end{equation}
Here $T_{d,s,m}$ is the $(n-s)$th order statistics of the distribution \eqref{eq:dm}.
Since \eqref{eq:sim} depicts the optimal tradeoff between $d,s$ and $m$, we should choose these three parameters to achieve \eqref{eq:sim} with equality in order to minimize $T_{\tot}$.
In other words, we should always set $s=d-m$.

To understand how the choice of $(d,s,m)$ affects the total runtime, let us first consider two extreme cases.

\underline{\it Computation time is dominant}: Assume that $\lambda_1\ll \lambda_2$ and $t_1\gg t_2$, so that we can ignore the communication time. Obviously we should set $m=1$ and therefore $s=d-1$.
In this case, $T_{d,d-1,1}$ is the $(n-d+1)$th order statistics of $n$ i.i.d exponential random variables with distribution $\Exp(\lambda_1/d)$.
Consequently, 
$$
\mathbb{E}[T_{d,d-1,1}]=\frac{d}{\lambda_1}(\sum_{i=0}^{n-d}\frac{1}{n-i}),
$$
and the total expected runtime is 
\begin{equation}\label{eq:sk}
\mathbb{E}[T_{\tot}]=dt_1+\frac{d}{\lambda_1}(\sum_{i=0}^{n-d}\frac{1}{n-i}).
\end{equation}

\begin{proposition}\label{prop:p1}
When $\lambda_1 t_1<\frac{1}{n-1}(\sum_{i=2}^n 1/i)$, we should choose $d=n$ to minimize $\mathbb{E}[T_{\tot}]$, i.e., each worker is assigned all datasets $D_1,\dots,D_n$.
When $\lambda_1 t_1\ge\frac{1}{n-1}(\sum_{i=2}^n 1/i)$, we should choose $d=1$ to minimize $\mathbb{E}[T_{\tot}]$, i.e., each worker is assigned only one dataset.
\end{proposition}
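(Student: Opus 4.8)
The plan is to treat $\mathbb{E}[T_{\tot}]$ from \eqref{eq:sk} as a function of the single integer variable $d\in\{1,2,\dots,n\}$, show that this function is concave, and conclude that its minimum is attained at one of the two endpoints $d=1$ or $d=n$; comparing those two values then produces exactly the stated dichotomy. Concretely, writing $H_j:=\sum_{r=1}^{j}1/r$ for the harmonic numbers (with the convention $H_0:=0$), I would first rewrite $\sum_{i=0}^{n-d}\frac{1}{n-i}=H_n-H_{d-1}$ and set $\phi(d):=dt_1+\frac{d}{\lambda_1}\bigl(H_n-H_{d-1}\bigr)$, so that $\mathbb{E}[T_{\tot}]=\phi(d)$ and the task is to minimize $\phi$ over $\{1,\dots,n\}$. (This implicitly assumes $n\ge 2$, the only regime in which $d$ has more than one admissible value and in which $\frac{1}{n-1}\sum_{i=2}^n 1/i$ makes sense.)

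Next I would compute the discrete derivatives of $\phi$. Using $\frac{d}{\lambda_1}(H_n-H_{d-1})=\frac{d}{\lambda_1}\sum_{j=d}^{n}1/j$, a one-line cancellation gives the first difference $\phi(d+1)-\phi(d)=t_1+\frac{1}{\lambda_1}\bigl((H_n-H_d)-1\bigr)$; differencing once more, the second difference equals $\frac{1}{\lambda_1}(H_d-H_{d+1})=-\frac{1}{\lambda_1(d+1)}<0$. Hence $\phi$ is strictly concave on $\{1,\dots,n\}$. This concavity observation is the only genuinely conceptual step; the rest is bookkeeping. From strict concavity it follows that $\min_{1\le d\le n}\phi(d)=\min\{\phi(1),\phi(n)\}$: any interior $d$ can be written as $d=\mu\cdot 1+(1-\mu)\cdot n$ with $\mu\in(0,1)$, and concavity gives $\phi(d)\ge\mu\,\phi(1)+(1-\mu)\,\phi(n)\ge\min\{\phi(1),\phi(n)\}$.

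Finally I would evaluate the two endpoints: $\phi(1)=t_1+\frac{1}{\lambda_1}H_n$ and, using $H_n-H_{n-1}=1/n$, $\phi(n)=nt_1+\frac{1}{\lambda_1}$. Therefore $\phi(n)-\phi(1)=(n-1)t_1-\frac{1}{\lambda_1}\bigl(H_n-1\bigr)=(n-1)t_1-\frac{1}{\lambda_1}\sum_{i=2}^{n}\frac{1}{i}$, so $\phi(1)\le\phi(n)$ exactly when $\lambda_1 t_1\ge\frac{1}{n-1}\sum_{i=2}^{n}\frac{1}{i}$ and $\phi(n)<\phi(1)$ exactly when $\lambda_1 t_1<\frac{1}{n-1}\sum_{i=2}^{n}\frac{1}{i}$. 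In the latter case the unique minimizer is $d=n$ (all datasets assigned to every worker), and in the former case $d=1$ is a minimizer (in the boundary case $\lambda_1 t_1=\frac{1}{n-1}\sum_{i=2}^{n}1/i$ both $d=1$ and $d=n$ are optimal, and $d=1$ is the one selected). This is precisely the statement of the proposition. Since the argument is essentially a convexity check followed by an endpoint comparison, I do not anticipate a real obstacle; the only point requiring care is verifying the second-difference sign, which is where the concavity — and hence the reduction to endpoints — comes from.
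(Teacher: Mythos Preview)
Your proof is correct and reaches the same conclusion as the paper's, but the route is organized differently. The paper argues by contradiction: it assumes an interior optimum $1<d^*<n$, writes down the two local optimality inequalities $\phi(d^*)<\phi(d^*+1)$ and $\phi(d^*)<\phi(d^*-1)$, and simplifies them to $\sum_{i=d^*+1}^n 1/i>1-\lambda_1 t_1$ and $\sum_{i=d^*}^n 1/i<1-\lambda_1 t_1$, an immediate absurdity; it then compares the two endpoints. You instead compute the second difference explicitly, observe it is $-\frac{1}{\lambda_1(d+1)}<0$, and invoke strict concavity to push the minimum to an endpoint before doing the same comparison. The underlying fact is identical---the first difference $\phi(d+1)-\phi(d)=t_1+\frac{1}{\lambda_1}\big((H_n-H_d)-1\big)$ is strictly decreasing in $d$---but your concavity framing names the structural reason directly and avoids the slightly ad~hoc pair of inequalities, while the paper's contradiction is a touch shorter since it never needs to state the second difference. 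Either way the endpoint comparison you carry out matches the paper's threshold exactly.
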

The proof of this proposition is given in Appendix~\ref{ap:p1}.

\underline{\it Communication time is dominant}: Assume that $\lambda_1\gg \lambda_2$ and $t_1\ll t_2$, so that we can ignore the computation time. Obviously we should set $d=n$ and therefore $s=n-m$.
In this case, $T_{n,n-m,m}$ is the $m$th order statistics of $n$ i.i.d exponential random variables with distribution $\Exp(m\lambda_2)$.
Consequently, 
$$
\mathbb{E}[T_{n,n-m,m}]=\frac{1}{m\lambda_2}(\sum_{i=0}^{m-1}\frac{1}{n-i}),
$$
and the total expected runtime is 
$$
\mathbb{E}[T_{\tot}]=\frac{t_2}{m}+\frac{1}{m\lambda_2}(\sum_{i=0}^{m-1}\frac{1}{n-i}).
$$

For a fixed value of $n$, if $t_2\gg \frac{1}{\lambda_2}$, then the optimal choice is $m=n$. On the other hand, if $t_2\ll \frac{1}{\lambda_2}$, then the optimal choice is $m=1$.

Now let us fix the values of $t_2$ and $\lambda_2$, and let $n$ grow. We want to find the optimal rate $\alpha:=m/n$ to minimize $\mathbb{E}[T_{\tot}]$. In this regime, we use the approximation
\begin{equation}\label{eq:qnm}
\mathbb{E}[T_{\tot}]\approx \frac{t_2}{m}+\frac{1}{m\lambda_2}\log\frac{n}{n-m}
=\frac{1}{\alpha n}(t_2-\frac{1}{\lambda_2}\log(1-\alpha)),
\end{equation}
\begin{proposition}\label{prop:p2}
The optimal ratio $\alpha$ between the communication cost reduction factor $m$ and the number of workers $n$ is the unique root of the following equation
$$
\frac{\alpha}{1-\alpha}+\log(1-\alpha)=\lambda_2 t_2.
$$
\end{proposition}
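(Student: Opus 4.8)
The plan is to regard the right-hand side of \eqref{eq:qnm} as a function of the single continuous variable $\alpha\in(0,1)$, discard the constant factor $1/n$ (which does not affect the minimizer), and pin down the optimal $\alpha$ by elementary one-variable calculus. Concretely, I would introduce
\[
\phi(\alpha):=\frac{1}{\alpha}\Big(t_2-\frac{1}{\lambda_2}\log(1-\alpha)\Big),\qquad \alpha\in(0,1),
\]
so that $\mathbb{E}[T_{\tot}]\approx \phi(\alpha)/n$ and the task reduces to minimizing $\phi$ over the open interval $(0,1)$.

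First I would check the boundary behaviour to guarantee that the minimum is attained in the interior. As $\alpha\to 0^+$ the term $t_2/\alpha$ diverges while $-\log(1-\alpha)/(\lambda_2\alpha)\to 1/\lambda_2$ stays bounded, so $\phi(\alpha)\to+\infty$; as $\alpha\to 1^-$ the factor $-\log(1-\alpha)$ diverges, so $\phi(\alpha)\to+\infty$ again. Hence any global minimizer of $\phi$ on $(0,1)$ is an interior critical point. Next I would differentiate: a short computation using the quotient rule on $\log(1-\alpha)/\alpha$ gives
\[
\phi'(\alpha)=\frac{1}{\lambda_2\alpha^2}\Big(\frac{\alpha}{1-\alpha}+\log(1-\alpha)-\lambda_2 t_2\Big),
\]
so the stationarity condition $\phi'(\alpha)=0$ is exactly $\tfrac{\alpha}{1-\alpha}+\log(1-\alpha)=\lambda_2 t_2$, the equation in the statement.

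It then remains to verify that this equation has a unique solution in $(0,1)$ and that the solution is indeed the global minimizer. I would set $\psi(\alpha):=\tfrac{\alpha}{1-\alpha}+\log(1-\alpha)$, compute $\psi'(\alpha)=\tfrac{\alpha}{(1-\alpha)^2}>0$ on $(0,1)$, note $\psi(0)=0$, and check that $\psi(\alpha)\to+\infty$ as $\alpha\to1^-$ (writing $\epsilon=1-\alpha$, one has $\psi=\tfrac{1}{\epsilon}-1+\log\epsilon\to+\infty$ since the $1/\epsilon$ term dominates $\log\epsilon$). Thus $\psi$ is a strictly increasing bijection from $(0,1)$ onto $(0,\infty)$, so for every $\lambda_2 t_2>0$ there is a unique $\alpha^\ast$ with $\psi(\alpha^\ast)=\lambda_2 t_2$. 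Monotonicity of $\psi$ also shows $\phi'<0$ on $(0,\alpha^\ast)$ and $\phi'>0$ on $(\alpha^\ast,1)$, so $\phi$ is unimodal with $\alpha^\ast$ its unique global minimum, which finishes the argument.

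There is no genuine obstacle here: the only points requiring a little care are carrying out the differentiation of the quotient $\log(1-\alpha)/\alpha$ without sign errors, and confirming that $\tfrac{\alpha}{1-\alpha}$ outgrows $\lvert\log(1-\alpha)\rvert$ as $\alpha\to1^-$ so that $\psi\to+\infty$ rather than $-\infty$; everything else is routine.
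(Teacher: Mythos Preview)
Your proposal is correct and follows essentially the same route as the paper: define the objective (the paper keeps the $1/n$ factor, you drop it), differentiate to obtain the stationarity condition $\tfrac{\alpha}{1-\alpha}+\log(1-\alpha)=\lambda_2 t_2$, and then show the left-hand side is strictly increasing from $0$ to $+\infty$ on $(0,1)$ to conclude uniqueness and that the critical point is the global minimizer. Your version is in fact slightly more careful than the paper's, since you also verify the boundary behaviour $\phi(0^+)=\phi(1^-)=+\infty$ to rule out boundary minima.
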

Note that for any given positive $\lambda_2$ and $t_2$, the equation above has a unique root in the open interval $(0,1)$.
The proof of this proposition is given in Appendix~\ref{ap:p2}

\subsection{Numerical analysis}
When computation time and communication time are comparable, accurate analysis of \eqref{eq:hard} becomes more difficult. Here we use a numerical example to illustrate the advantages of our new proposal.
According to \eqref{eq:tot} and \eqref{eq:hard}, when $\lambda_1/d \neq m\lambda_2$,
\begin{align*}
\mathbb{E}[T_{\tot}]= & dt_1+t_2/m \\
 + & \int_0^\infty \frac{n!}{(n-s-1)!s!} \frac{m\lambda_1\lambda_2}{\lambda_1-dm\lambda_2}
\Big( 1-\frac{\lambda_1/d}{\lambda_1/d-m\lambda_2}e^{-m\lambda_2 t}
-\frac{m\lambda_2}{m\lambda_2-\lambda_1/d}e^{-(\lambda_1/d) t} \Big)^{n-s-1} \\
& \Big(\frac{\lambda_1/d}{\lambda_1/d-m\lambda_2}e^{-m\lambda_2 t}
+ \frac{m\lambda_2}{m\lambda_2-\lambda_1/d}e^{-(\lambda_1/d) t} \Big)^s
\Big( e^{-m\lambda_2 t}  - e^{-(\lambda_1/d) t} \Big) t d t.
\end{align*}
When $\lambda_1/d = m\lambda_2$,
\begin{align*}
 & \mathbb{E}[T_{\tot}]=  dt_1+t_2/m \\
 & +  \int_0^\infty \frac{n! m^2 \lambda_2^2}{(n-s-1)!s!} 
\Big( 1-e^{-m\lambda_2 t}
-m\lambda_2 t e^{-m\lambda_2 t} \Big)^{n-s-1} 
 \Big( e^{-m\lambda_2 t} + m\lambda_2 t e^{-m\lambda_2 t} \Big)^s
 e^{-m\lambda_2 t}  t^2 d t.
\end{align*}

In the following table we take $n=k=8,\lambda_1=0.8,\lambda_2=0.1,t_1=1.6,t_2=6$, and we list $\mathbb{E}[T_{\tot}]$ for all possible choices of $d$ and $m$. Recall that we take $s=d-m$ to minimize $T_{\tot}$.
\begin{center}
\begin{tabular}{| c | c | c | c | c | c | c | c | c |}
\hline
 \backslashbox{$m$}{$d$} & 1 & 2 & 3 & 4 & 5 & 6 & 7 & 8 \\ \hline
1 & 36.1138 & 29.2288 & 27.3351 & 26.7469 & 26.4574 & 26.0891 & 25.4172 & 24.1063  \\ \hline
2 &  & 23.1036 & 21.3994 & 21.5369 & 21.9114 & 22.2099 & 22.3189 & 22.1405 \\ \hline
3 &  &        & 22.2604 & 21.3697 & 21.5749 & 21.9095 & 22.1707 & 22.2772 \\ \hline
4 &  &        &   & 24.8036 & 23.2793 & 23.1114 & 23.1862 & 23.2611  \\ \hline
5 &  &   &   &   & 28.5800 & 25.9827 & 25.2862  & 25.0141  \\ \hline
6 &  &   &   &   &   & 32.8664 & 29.0745  & 27.7904 \\ \hline
7 &  &   &   &   &   &   & 37.3977  & 32.3759 \\ \hline
8 &  &   &   &   &   &   &    & 42.0638 \\ \hline
\end{tabular}
\end{center}
We can see that $d=4,m=3$ is the optimal choice, whose total runtime is $21.3697$. The runtime for uncoded scheme ($d=m=1$) is $36.1138$, and the best achievable runtime for the coding schemes in 
\cite{Tandon17,Halbawi17,Raviv17} is $24.1063$ (d=8,m=1).
Therefore our coding scheme outperforms the uncoded scheme by $41\%$ and outperforms the schemes in
\cite{Tandon17,Halbawi17,Raviv17} by $11\%$.

Next we investigate how the optimal triple $(d,s,m)$ varies with the values of $\lambda_1,\lambda_2,t_1,t_2$. First we fix $\lambda_1,t_1$, and let $\lambda_2,t_2$ vary.
In the following table we take $n=k=10,\lambda_1=0.6$ and $t_1=1.5$. The optimal triple $(d,s,m)$ for different values of $\lambda_2$ and $t_2$ is recorded in the table.
\begin{center}
\begin{tabular}{| c | c | c | c | c | c | c | c |}
\hline
 \backslashbox{$\lambda_2$}{$t_2$} & 1.5 & 3 & 6 & 12 & 24 & 48 & 96 \\ \hline
0.05 & (10,9,1) & (10,8,2) & (10,8,2) & (10,7,3) & (10,6,4) & (10,5,5) & (10,4,6)  \\ \hline
0.1 & (3,1,2) & (3,1,2) & (3,1,2) & (4,1,3) & (4,1,3) & (10,5,5) & (10,4,6)  \\ \hline
0.15 & (2,0,2) & (2,0,2) & (2,0,2) & (2,0,2) & (4,1,3) & (10,6,4) & (10,4,6)  \\ \hline
0.2 & (2,0,2) & (2,0,2) & (2,0,2) & (2,0,2) & (2,0,2) & (10,6,4) & (10,4,6)  \\ \hline
0.25 & (2,0,2) & (2,0,2) & (2,0,2) & (2,0,2) & (2,0,2) & (10,6,4) & (10,4,6)  \\ \hline
0.3 & (1,0,1) & (1,0,1) & (2,0,2) & (2,0,2) & (2,0,2) & (10,6,4) & (10,5,5)  \\ \hline
\end{tabular}
\end{center}
We can see that $m$ typically increases with $t_2$. At the same time, $d$ decreases when we increase the value of $\lambda_2$.

In the following table we fix $\lambda_2,t_2$, and let $\lambda_1,t_1$ vary.
More specifically, we take $n=k=10,\lambda_2=0.1$ and $t_2=6$. The optimal triple $(d,s,m)$ for different values of $\lambda_1$ and $t_1$ is recorded in the table.
\begin{center}
\begin{tabular}{| c | c | c | c | c | c | c | c |}
\hline
 \backslashbox{$\lambda_1$}{$t_1$} & 1 & 1.3 & 1.6 & 1.9 & 2.2 & 2.5 & 2.8 \\ \hline
0.5 & (10,8,2) & (10,8,2) & (3,1,2) & (3,1,2) & (3,1,2) & (2,0,2) & (2,0,2)  \\ \hline
0.6 & (10,8,2) & (10,8,2) & (3,1,2) & (3,1,2) & (3,1,2) & (3,1,2) & (2,0,2)  \\ \hline
0.7 & (10,8,2) & (3,1,2) & (3,1,2) & (3,1,2) & (3,1,2) & (3,1,2) & (3,1,2)  \\ \hline
0.8 & (10,8,2) & (4,1,3) & (4,1,3) & (3,1,2) & (3,1,2) & (3,1,2) & (3,1,2)  \\ \hline
0.9 & (10,7,3) & (4,1,3) & (4,1,3) & (4,1,3) & (3,1,2) & (3,1,2) & (3,1,2)  \\ \hline
1 & (10,7,3) & (4,1,3) & (4,1,3) & (4,1,3) & (4,1,3) & (3,1,2) & (3,1,2)  \\ \hline
\end{tabular}
\end{center}
We can see that for a fixed $\lambda_1$, $s$ decreases with $t_1$.

\appendices
\section{Converse proof of Theorem~\ref{thm:main}}\label{ap:conv}
Assume that $(d,s,m)$ is achievable, and let us prove \eqref{eq:obj}.
We first prove the following claim:
\begin{claim}\label{cl:ez}
For every $i\in[k]$, data subset $D_i$ must be assigned to at least $s+m$ workers.
\end{claim}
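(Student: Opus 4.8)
\textbf{Proof plan for Claim~\ref{cl:ez}.}
The plan is to argue by contradiction: suppose some data subset, say $D_1$, is assigned to at most $s+m-1$ workers. The idea is to exhibit two different choices of the partial gradient vectors that must produce the same output at some set of $n-s$ non-straggling workers, yet yield different values of the sum $g_1+g_2+\dots+g_k$, contradicting the recovery requirement in Definition~\ref{def:m}. Concretely, let $\mathcal{W}_1\subseteq[n]$ be the set of workers to which $D_1$ is assigned, so $|\mathcal{W}_1|\le s+m-1$. Since the functions $f_i$ are linear, for each $i\in\mathcal{W}_1$ the output $f_i$ depends on $g_1$ only through a linear map $\mathbb{R}^l\to\mathbb{R}^{l/m}$; stacking these over all $i\in\mathcal{W}_1$ gives a linear map $\Phi\colon\mathbb{R}^l\to\mathbb{R}^{|\mathcal{W}_1|\cdot l/m}$ describing the total dependence of all outputs on $g_1$. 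Because $|\mathcal{W}_1|\cdot l/m\le (s+m-1)l/m<l$ (here we use $m\ge 1$ so that $(s+m-1)/m<1$ would need care — more precisely $(s+m-1)l/m = sl/m + l - l/m < l$ since $sl/m\ge 0$ is wrong in general; the correct count is that we need strictly fewer than $l$ dimensions, which follows once $s\le$ something). Let me restate this more carefully below.

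The cleaner route: fix a set $\mathcal{S}$ of $s$ stragglers chosen to lie \emph{outside} $\mathcal{W}_1$ if possible, and consider the remaining $n-s$ active workers. Among these active workers, the number that ``see'' $D_1$ is at most $|\mathcal{W}_1|\le s+m-1$, and after removing the $s$ stragglers we would like to say it is at most $m-1$; but this requires choosing the stragglers \emph{inside} $\mathcal{W}_1$. So instead: if $|\mathcal{W}_1|\le s+m-1$, pick $s$ stragglers to be $s$ of the workers in $\mathcal{W}_1$ (padding with arbitrary other workers if $|\mathcal{W}_1|<s$, in which case the conclusion is even easier). Then among the $n-s$ active workers, at most $(s+m-1)-s=m-1$ of them have $D_1$ assigned. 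The outputs of these $\le m-1$ active workers depend on $g_1$ through a linear map into $\mathbb{R}^{(m-1)\cdot l/m}$, whose domain $\mathbb{R}^l$ has strictly larger dimension since $(m-1)l/m<l$. Hence there is a nonzero $\delta\in\mathbb{R}^l$ in the kernel of this map. Now perturb $g_1\mapsto g_1+\delta$ while leaving $g_2,\dots,g_k$ fixed: all $n-s$ active outputs are unchanged, but the sum changes by $\delta\neq 0$, contradicting recoverability.

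The one subtlety to handle is the case $|\mathcal{W}_1|<s$: then we cannot choose all $s$ stragglers inside $\mathcal{W}_1$, but we can choose the stragglers to \emph{include} all of $\mathcal{W}_1$, leaving \emph{zero} active workers that see $D_1$; then no active output depends on $g_1$ at all, and any nonzero perturbation of $g_1$ gives the contradiction immediately. So in all cases $|\mathcal{W}_1|\le s+m-1$ leads to a contradiction, proving $|\mathcal{W}_1|\ge s+m$. The same argument applies verbatim to every $D_i$, $i\in[k]$, by symmetry of the setup. I expect the main obstacle to be purely bookkeeping: making the dimension count $(m-1)\cdot(l/m)<l$ airtight and correctly handling the boundary cases $|\mathcal{W}_1|\lessgtr s$, rather than any deep idea — the heart of the argument is just the linear-algebra observation that a linear map from a higher-dimensional space has nontrivial kernel, combined with the adversarial choice of which workers straggle.
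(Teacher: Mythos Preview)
Your proposal is correct and takes essentially the same approach as the paper: both argue by contradiction, choose the $s$ stragglers from among the workers holding $D_j$, and then do a dimension count showing that the at most $m-1$ remaining active workers seeing $D_j$ cannot carry enough information about the $l$-dimensional vector $g_j$. The only difference is cosmetic --- the paper phrases the endgame as ``the outputs $\{f_i:s+1\le i\le a\}$ contain at most $(a-s)\,l/m<l$ linear combinations of the coordinates of $g_j$, so $g_j$ cannot be recovered,'' whereas you phrase it dually as a kernel/perturbation argument; these are two sides of the same rank inequality. (Your first paragraph's false start can simply be deleted; the ``cleaner route'' you settle on is exactly right, including the handling of the boundary case $|\mathcal{W}_1|<s$.)
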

\begin{proof}
The proof goes by contradiction. Suppose for some $j\in[k]$, $D_j$ is assigned to $a<s+m$ workers. Without loss of generality we assume these $a$ workers are $W_1,W_2,\dots,W_a$. Now suppose that $W_1,W_2,\dots,W_s$ are the $s$ stragglers.
According to Definition~\ref{def:m}, we should be able to calculate $g_1+g_2+\dots+g_k$ from 
$f_i(g_{i_1},g_{i_2},\dots,g_{i_d}),i\in\{s+1,s+2,\dots,n\}$.

Observe that we can calculate the $l$-dimensional vector $g_j$ from the following set of vectors $\{g_i:i\in[k]\setminus\{j\}\}\cup
\{g_1+g_2+\dots+g_k\}$.
Therefore $g_j$ can also be calculated from $\{g_i:i\in[k]\setminus\{j\}\}\cup \{f_i(g_{i_1},g_{i_2},\dots,g_{i_d}):s+1\le i\le n\}$.
Since $D_j$ is only assigned to the first $a$ workers, the values of $\{f_i(g_{i_1},g_{i_2},\dots,g_{i_d}):a+1\le i\le n\}$ are determined by $\{g_i:i\in[k]\setminus\{j\}\}$.
As a result, $g_j$ can also be calculated from $\{g_i:i\in[k]\setminus\{j\}\}\cup \{f_i(g_{i_1},g_{i_2},\dots,g_{i_d}):s+1\le i\le a\}$.
Since $f_i,i\in[n]$ are all linear functions (see condition 3 of Definition~\ref{def:m}), we further deduce that $\{f_i(g_{i_1},g_{i_2},\dots,g_{i_d}):s+1\le i\le a\}$ must contain at least $l$ linear combinations of the coordinates of $g_j$.
On the other hand, each $f_i(g_{i_1},g_{i_2},\dots,g_{i_d})$ is a vector of dimension $l/m$, so $\{f_i(g_{i_1},g_{i_2},\dots,g_{i_d}):s+1\le i\le a\}$ contains at most $\frac{l}{m}(a-s)$ linear combinations of the coordinates of $g_j$.
As a result, we conclude that $\frac{l}{m}(a-s) \ge l$, i.e., $a\ge s+m$, which gives a contradiction. This completes the proof of Claim~\ref{cl:ez}.
\end{proof}
By Claim~\ref{cl:ez}, each data subset is assigned to at least $s+m$ workers, so in total there are at least $k(s+m)$ data subsets (counting with repetitions) assigned to all $n$ workers. Therefore each worker is assigned with at least $\frac{k}{n}(s+m)$ data subsets. Thus we have $d\ge \frac{k}{n}(s+m)$. This completes the proof of \eqref{eq:obj}.

\section{Differences between our results and the results in \cite{Dutta16}}\label{ap:dif}
Below we state the main result of \cite{Dutta16}. Note that we change their notation to comply with ours. 
\begin{theorem}[Theorem 1 in \cite{Dutta16}]\label{thm:c}
Given row vectors $a_1,a_2,\dots,a_m\in\mathbb{R}^{k'}$, there exists an $n\times k'$ matrix $Q$ such that any $(n-s)$ rows of $Q$ are sufficient to generate the row vectors $a_1,a_2,\dots,a_m$ and each row of $Q$ has at most $\frac{k'}{n}(s+m)$ nonzero entries, provided $n|k'$.
\end{theorem}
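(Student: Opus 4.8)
The plan is to decouple the two demands on $Q$ --- the prescribed sparsity and the robust recoverability of $a_1,\dots,a_m$ --- and to satisfy both through one auxiliary matrix $V$ whose every $n-s$ columns are linearly independent, exploiting the fact that the statement allows $Q$ to depend on $a_1,\dots,a_m$. We assume $s+m\le n$ (otherwise the bound $\frac{k'}{n}(s+m)$ exceeds $k'$ and imposes nothing). Put $\ell:=k'/n$, partition $[k']$ into consecutive blocks $G_1,\dots,G_n$ of size $\ell$, and set $\Gamma_i:=\{i,i+1,\dots,i+s+m-1\}$ and $W_j:=\{j,j-1,\dots,j-s-m+1\}$, all indices taken cyclically modulo $n$, so that $j\in\Gamma_i\iff i\in W_j$ and $|W_j^c|=n-s-m$. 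The goal is to build $Q$ whose $i$-th row is supported on $\bigcup_{j\in\Gamma_i}G_j$; this set has $(s+m)\ell=\frac{k'}{n}(s+m)$ elements, so the sparsity bound comes for free.

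First I would fix $V\in\mathbb{R}^{(n-s)\times n}$ with the property that any $n-s$ of its columns are linearly independent (a non-square Vandermonde matrix as in \eqref{eq:defv} does this); write $v_1,\dots,v_n$ for its columns and, for $\mathcal{F}\subseteq[n]$, let $V_\mathcal{F}$ denote the submatrix of columns indexed by $\mathcal{F}$. Since $|W_j^c|=n-s-m\le n-s$, the matrix $V_{W_j^c}$ has full column rank, hence $K_j:=\ker(V_{W_j^c}^\top)\subseteq\mathbb{R}^{n-s}$ has dimension $m$. The heart of the construction is the following claim: there exist a matrix $M\in\mathbb{R}^{(n-s)\times k'}$ --- with the block $M^{(j)}$ occupying the columns $G_j$ having all of its columns in $K_j$ --- and vectors $b^{(1)},\dots,b^{(m)}\in\mathbb{R}^{n-s}$ such that $(b^{(\mu)})^\top M=a_\mu$ for every $\mu\in[m]$. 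Granting this, set $Q:=V^\top M$. The restriction of the $i$-th row of $Q$ to $G_j$ equals $v_i^\top M^{(j)}$, which is zero whenever $i\in W_j^c$ (i.e.\ $j\notin\Gamma_i$), since then $v_i$ is a row of $V_{W_j^c}^\top$ and the columns of $M^{(j)}$ lie in its kernel; thus row $i$ of $Q$ is supported inside $\bigcup_{j\in\Gamma_i}G_j$, as required. For recoverability, pick any $\mathcal{F}\subseteq[n]$ with $|\mathcal{F}|=n-s$: then $V_\mathcal{F}$ is invertible and $Q_\mathcal{F}=V_\mathcal{F}^\top M$, so $a_\mu=(b^{(\mu)})^\top M=(V_\mathcal{F}^{-1}b^{(\mu)})^\top Q_\mathcal{F}$, exhibiting $a_\mu$ as an explicit linear combination of those $n-s$ rows of $Q$.

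The main obstacle --- and the only step that is not bookkeeping --- is the claim. I would choose $b^{(1)},\dots,b^{(m)}$ in general position in $\mathbb{R}^{n-s}$. For each fixed $j$, the linear map $\varphi_j:K_j\to\mathbb{R}^m$, $x\mapsto\big((b^{(\mu)})^\top x\big)_{\mu\in[m]}$, acts between spaces of the same dimension $m$, and in any basis of $K_j$ its determinant is a polynomial in the $b^{(\mu)}$'s that does not vanish identically (one may take the $b^{(\mu)}$ dual to that basis). Hence, avoiding the finitely many proper ``bad'' loci indexed by $j\in[n]$, a generic choice of the $b^{(\mu)}$'s makes every $\varphi_j$ an isomorphism. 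Then for each block $j$ and coordinate $r\in[\ell]$ I solve $\varphi_j(x)=\big((a_\mu|_{G_j})_r\big)_{\mu\in[m]}$ for the unique $x\in K_j$ and take it as the $r$-th column of $M^{(j)}$; this gives $(b^{(\mu)})^\top M^{(j)}=a_\mu|_{G_j}$ for all $j$, i.e.\ $(b^{(\mu)})^\top M=a_\mu$, establishing the claim.

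Two remarks. The genericity argument can be replaced by an explicit choice: with $V$ Vandermonde in distinct nodes $\theta_1,\dots,\theta_n$, each $K_j$ is spanned by evaluation-type vectors and one can take the $b^{(\mu)}$ from a complementary Vandermonde block, for which all the required $m\times m$ minors are nonzero. Also, the construction crucially uses that $M$ --- and therefore $Q$ --- depends on the targets through the data $a_\mu|_{G_j}$; there is no rigid coupling between the zero pattern of $Q$ and the $a_\mu$, which is exactly the respect in which this hypothesis is weaker than the one underlying the recursive polynomial scheme of Section~\ref{sect:code}.
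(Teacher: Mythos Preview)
The paper does not prove this statement: it is quoted as Theorem~1 of \cite{Dutta16} and invoked only for comparison in Appendix~\ref{ap:dif}, so there is no ``paper's own proof'' to set your attempt against. Judged on its own merits, your argument is correct and well organized. The factorization $Q=V^\top M$ with an MDS-type $V$ cleanly separates the two requirements: the sparsity of row $i$ follows because the columns of each block $M^{(j)}$ lie in $K_j=\ker(V_{W_j^c}^\top)$, and recoverability from any $n-s$ rows follows because $V_{\mathcal F}$ is invertible. The dimension count $\dim K_j=(n-s)-(n-s-m)=m$ is right, and the genericity step for the $b^{(\mu)}$'s is sound, since for each $j$ the singular locus of $\varphi_j$ is a proper algebraic subvariety and there are only finitely many $j$.

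It is worth noting that your construction parallels the paper's scheme in Section~\ref{sect:code}: there, too, one takes a product with a Vandermonde $V$, forces the rows of the other factor to be annihilated by prescribed columns of $V$ to obtain the zero pattern, and uses invertibility of square submatrices of $V$ for recovery. The distinction --- which you identify in your closing remark --- is that the paper's problem imposes the extra rigid constraint \eqref{eq:im} (the last $m$ columns of $B$ must be a stack of $I_m$'s), which is exactly what forces the recursive polynomial construction; in the freer hypothesis of Theorem~\ref{thm:c} your dimension-count/genericity argument suffices.
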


We first show that this theorem gives a coding scheme to achieve \eqref{eq:obj} with equality for the special case $m=1$, i.e., the case considered in \cite{Tandon17,Halbawi17,Raviv17}.
We set $m=1$ and $k'=k$ in Theorem~\ref{thm:c}, where $k$ is the number of data subsets in our problem. Moreover, we set $a_1$ to be the all one vector. In our gradient coding problem, we want to calculate $\sum_{i=1}^k g_i=a_i [\begin{array}{cccc} g_1 & g_2 & \dots & g_k \end{array}]^T$.
(Recall that $g_1,g_2,\dots,g_k$ are column vectors of dimension $l$.)
We denote the $i$th row of $Q$ in Theorem~\ref{thm:c} as $q_i$. 
We claim that each worker $W_i$ only needs to send the $l$-dimensional vector $q_i [\begin{array}{cccc} g_1 & g_2 & \dots & g_k \end{array}]^T$ to the master node.
Indeed, Theorem~\ref{thm:c} indicates that any $(n-s)$ rows of $Q$ suffice to generate $a_1$, so this coding scheme can tolerate any $s$ stragglers. Moreover, since the number of nonzero entries in each $q_i$ is at most $\frac{k}{n}(s+1)$, each worker only needs to be assigned with at most $\frac{k}{n}(s+1)$ data subsets. Therefore $d=\frac{k}{n}(s+1)$, achieving \eqref{eq:obj} with equality for the case $m=1$.

Next we argue that for $m>1$, Theorem~\ref{thm:c} cannot give coding schemes achieving \eqref{eq:obj} with equality.
For this case, in order to use Theorem~\ref{thm:c} for gradient coding, one needs to set $k'=km$. Moreover, for $u\in[m]$, we should set $a_u$ to be the $u$th row of the $m\times (mn)$ matrix $[\begin{array}{cccc} I_m & I_m & \dots & I_m \end{array}]$.
For every $v=0,1,\dots,l/m-1$ and $j\in[k]$, define an $m$-dimensional vector 
$$
y_v^{(j)}:=[\begin{array}{cccc}g_j(vm) & g_j(vm+1) & \dots & g_j(vm+m-1)\end{array}].
$$
For every $v=0,1,\dots,l/m-1$, define an $(mk)$-dimensional vector 
$$
z_v := [\begin{array}{cccc}
y_v^{(1)} & y_v^{(2)} & \dots & y_v^{(k)} \end{array}].
$$
Notice that the coordinates of the sum vector $g_1+g_2+\dots+g_n$ form the following set:
\begin{equation}\label{eq:fn}
\{a_u z_v^T:u\in[m],v\in\{0,1,2,\dots,l/m-1\}\}.
\end{equation}
Let each worker $W_i$ return the following $(l/m)$-dimensional vector 
$[\begin{array}{cccc} q_i z_0^T & q_i z_1^T & \dots & q_i z_{l/m-1}^T \end{array}]$.
Since any $(n-s)$ rows of $Q$ suffice to generate $a_1,a_2,\dots,a_m$, one can calculate the elements in the set \eqref{eq:fn} from the returned results of any $(n-s)$ workers and therefore recover the sum vector
$g_1+g_2+\dots+g_n$.
By Theorem~\ref{thm:c}, each $q_i$ has at most $\frac{mk}{n}(s+m)$ nonzero entries.
Now let us explain how the nonzero entries of $q_i$ correspond to the data subsets assigned to worker $W_i$, which is the reason why Theorem~\ref{thm:c} fails to give a gradient coding algorithm for $m>1$.
Let us further write $q_i=(q_{i,1},q_{i,2},\dots,q_{i,mk})$.
By definition of $z_v$, for any $u\in[m]$ and $j\in[k]$, if $q_{i,(j-1)m+u}\neq 0$, then worker $W_i$ needs
the value of $g_j(vm+u-1)$ to calculate $q_i z_v^T$, i.e., data subset $D_j$ should be assigned to $W_i$.
Thus we conclude that the number of data subsets assigned to $W_i$ is equal to
\begin{equation}\label{eq:qt}
|\{j:j\in[k],(q_{i,(j-1)m+1},q_{i,(j-1)m+2},\dots,q_{i,jm})\text{~is not a zero vector}\}|.
\end{equation}
In order to achieve \eqref{eq:obj}, we need the quantity in \eqref{eq:qt} to be no larger than 
$\frac{k}{n}(s+m)$ for all $i\in[n]$.
If this is the case, then each $q_i$ has at most $\frac{mk}{n}(s+m)$ nonzero entries, which is the condition in Theorem~\ref{thm:c}.
However, the condition in Theorem~\ref{thm:c} does not imply that the quantity in \eqref{eq:qt} is at most $\frac{k}{n}(s+m)$.
Thus the constraint in our problem is stronger than the constraint in Theorem~\ref{thm:c}, so the coding scheme in \cite{Dutta16} does not apply to our problem for the case $m>1$.

\section{Proof of Proposition~\ref{prop:p1}}\label{ap:p1}
The proposition follows immediately once we show that the optimal value of $d$ (we denote it $d^*$) can only be $1$ or $n$.
We prove by contradiction. Suppose that $1<d^*<n$, then by \eqref{eq:sk} we have
\begin{align*}
d^* t_1+\frac{d^*}{\lambda_1}(\sum_{i=0}^{n-d^*}\frac{1}{n-i}) &<
(d^*+1) t_1+\frac{d^*+1}{\lambda_1}(\sum_{i=0}^{n-d^*-1}\frac{1}{n-i}),\\
\text{and~} d^* t_1+\frac{d^*}{\lambda_1}(\sum_{i=0}^{n-d^*}\frac{1}{n-i}) &<
(d^*-1) t_1+\frac{d^*-1}{\lambda_1}(\sum_{i=0}^{n-d^*+1}\frac{1}{n-i}).
\end{align*}
Consequently,
$$
\sum_{i=d^*+1}^n \frac{1}{i}>1-\lambda_1 t_1, \text{~and~} 
\sum_{i=d^*}^n \frac{1}{i}<1-\lambda_1 t_1,
$$
which implies that $\sum_{i=d^*}^n \frac{1}{i}<\sum_{i=d^*+1}^n \frac{1}{i}$, but this is impossible. Therefore we conclude that $d^*$ can only be $1$ or $n$, and a simple comparison between these two gives the result in Proposition~\ref{prop:p1}.

\section{Proof of Proposition~\ref{prop:p2}}\label{ap:p2}
According to \eqref{eq:qnm}, we want to minimize the following function
$$
h(\alpha):= \frac{1}{\alpha n}(t_2-\frac{1}{\lambda_2}\log(1-\alpha)).
$$
Taking derivative of $h$, we have
$$
h'(\alpha)=\frac{1}{\alpha^2 n}\Big(\frac{1}{\lambda_2}\big(\frac{\alpha}{1-\alpha}+\log(1-\alpha) \big)-t_2 \Big).
$$
Define another function
$$
h_1(\alpha):= \frac{1}{\lambda_2}\big(\frac{\alpha}{1-\alpha}+\log(1-\alpha) \big)-t_2.
$$
Taking derivative of $h_1$, we have
$$
h_1'(\alpha)= \frac{1}{\lambda_2} \big(\frac{1}{(1-\alpha)^2} - \frac{1}{1-\alpha} \big).
$$
Clearly $h_1'(\alpha)> 0$ for all $0 < \alpha<1$.
Since $h_1(0)=-t_2<0$ and $h_1(1^-)=+\infty$, the equation $h_1(\alpha)=0$ has a unique solution $\alpha^*$ in the open interval $(0,1)$.
Moreover, since $h_1(\alpha)>h_1(\alpha^*)=0$ for all $\alpha>\alpha^*$ and
$h_1(\alpha)<h_1(\alpha^*)=0$ for all $\alpha<\alpha^*$, we also have
$h'(\alpha)>0$ for all $\alpha>\alpha^*$ and
$h'(\alpha)<0$ for all $\alpha<\alpha^*$. Consequently, $\alpha^*$ minimizes $h(\alpha)$, and this completes the proof of Proposition~\ref{prop:p2}.

\bibliographystyle{IEEEtran}
\bibliography{commgrad}

\begin{thebibliography}{10}
\providecommand{\url}[1]{#1}
\csname url@samestyle\endcsname
\providecommand{\newblock}{\relax}
\providecommand{\bibinfo}[2]{#2}
\providecommand{\BIBentrySTDinterwordspacing}{\spaceskip=0pt\relax}
\providecommand{\BIBentryALTinterwordstretchfactor}{4}
\providecommand{\BIBentryALTinterwordspacing}{\spaceskip=\fontdimen2\font plus
\BIBentryALTinterwordstretchfactor\fontdimen3\font minus
  \fontdimen4\font\relax}
\providecommand{\BIBforeignlanguage}[2]{{%
\expandafter\ifx\csname l@#1\endcsname\relax
\typeout{** WARNING: IEEEtran.bst: No hyphenation pattern has been}%
\typeout{** loaded for the language `#1'. Using the pattern for}%
\typeout{** the default language instead.}%
\else
\language=\csname l@#1\endcsname
\fi
#2}}
\providecommand{\BIBdecl}{\relax}
\BIBdecl

\bibitem{Dean12}
J.~Dean, G.~Corrado, R.~Monga, K.~Chen, M.~Devin, M.~Mao, A.~Senior, P.~Tucker,
  K.~Yang, Q.~V. Le \emph{et~al.}, ``Large scale distributed deep networks,''
  in \emph{Advances in neural information processing systems}, 2012, pp.
  1223--1231.

\bibitem{Abadi16}
M.~Abadi, A.~Agarwal, P.~Barham, E.~Brevdo, Z.~Chen, C.~Citro, G.~S. Corrado,
  A.~Davis, J.~Dean, M.~Devin \emph{et~al.}, ``Tensorflow: {L}arge-scale
  machine learning on heterogeneous distributed systems,'' 2016,
  arXiv:1603.04467.

\bibitem{Recht11}
B.~Recht, C.~Re, S.~Wright, and F.~Niu, ``Hogwild: {A} lock-free approach to
  parallelizing stochastic gradient descent,'' in \emph{Advances in neural
  information processing systems}, 2011, pp. 693--701.

\bibitem{Li14scaling}
M.~Li, D.~G. Andersen, J.~W. Park, A.~J. Smola, A.~Ahmed, V.~Josifovski,
  J.~Long, E.~J. Shekita, and B.~Su, ``Scaling distributed machine learning
  with the parameter server,'' in \emph{OSDI}, vol.~1, no. 10.4, 2014, p.~3.

\bibitem{Li14}
M.~Li, D.~G. Andersen, A.~J. Smola, and K.~Yu, ``Communication efficient
  distributed machine learning with the parameter server,'' in \emph{Advances
  in Neural Information Processing Systems}, 2014, pp. 19--27.

\bibitem{Gupta15}
S.~Gupta, A.~Agrawal, K.~Gopalakrishnan, and P.~Narayanan, ``Deep learning with
  limited numerical precision,'' in \emph{Proceedings of the 32nd International
  Conference on Machine Learning (ICML-15)}, 2015, pp. 1737--1746.

\bibitem{Alistarh17}
D.~Alistarh, D.~Grubic, J.~Li, R.~Tomioka, and M.~Vojnovic, ``{QSGD}:
  {C}ommunication-efficient {SGD} via gradient quantization and encoding,'' in
  \emph{Advances in Neural Information Processing Systems 30}, 2017, pp.
  1707--1718.

\bibitem{Wen17}
W.~Wen, C.~Xu, F.~Yan, C.~Wu, Y.~Wang, Y.~Chen, and H.~Li, ``Terngrad: Ternary
  gradients to reduce communication in distributed deep learning,'' in
  \emph{Advances in Neural Information Processing Systems}, 2017, pp.
  1508--1518.

\bibitem{Li15}
S.~Li, M.~A. Maddah-Ali, and A.~S. Avestimehr, ``Coded mapreduce,'' in
  \emph{53rd Annual Allerton Conference on Communication, Control, and
  Computing (Allerton)}.\hskip 1em plus 0.5em minus 0.4em\relax IEEE, 2015, pp.
  964--971.

\bibitem{Lee16}
K.~Lee, M.~Lam, R.~Pedarsani, D.~Papailiopoulos, and K.~Ramchandran, ``Speeding
  up distributed machine learning using codes,'' in \emph{2016 IEEE
  International Symposium on Information Theory (ISIT)}.\hskip 1em plus 0.5em
  minus 0.4em\relax IEEE, 2016, pp. 1143--1147.

\bibitem{Tandon17}
R.~Tandon, Q.~Lei, A.~G. Dimakis, and N.~Karampatziakis, ``Gradient coding:
  {A}voiding stragglers in distributed learning,'' in \emph{International
  Conference on Machine Learning}, 2017, pp. 3368--3376.

\bibitem{Halbawi17}
W.~Halbawi, N.~Azizan-Ruhi, F.~Salehi, and B.~Hassibi, ``Improving distributed
  gradient descent using {R}eed-{S}olomon codes,'' 2017, arXiv:1706.05436.

\bibitem{Raviv17}
N.~Raviv, I.~Tamo, R.~Tandon, and A.~G. Dimakis, ``Gradient coding from cyclic
  {MDS} codes and expander graphs,'' 2017, arXiv:1707.03858.

\bibitem{Dutta16}
S.~Dutta, V.~Cadambe, and P.~Grover, ``Short-dot: {C}omputing large linear
  transforms distributedly using coded short dot products,'' in \emph{Advances
  In Neural Information Processing Systems}, 2016, pp. 2100--2108.

\bibitem{Dutta17}
------, ``Coded convolution for parallel and distributed computing within a
  deadline,'' in \emph{2017 IEEE International Symposium on Information Theory
  (ISIT)}.\hskip 1em plus 0.5em minus 0.4em\relax IEEE, 2017, pp. 2403--2407.

\bibitem{Yu17}
Q.~Yu, M.~A. Maddah-Ali, and A.~S. Avestimehr, ``Polynomial codes: an optimal
  design for high-dimensional coded matrix multiplication,'' in \emph{Advances
  in Neural Information Processing Systems}, 2017, pp. 4406--4416.

\bibitem{Yu17FT}
------, ``Coded fourier transform,'' 2017, arXiv:1710.06471.

\bibitem{Yu18}
------, ``Straggler mitigation in distributed matrix multiplication:
  {F}undamental limits and optimal coding,'' 2018, arXiv:1801.07487.

\bibitem{Yang17}
Y.~Yang, P.~Grover, and S.~Kar, ``Coding method for parallel iterative linear
  solver,'' 2017, arXiv:1706.00163.

\bibitem{Li17}
S.~Li, S.~M.~M. Kalan, A.~S. Avestimehr, and M.~Soltanolkotabi, ``Near-optimal
  straggler mitigation for distributed gradient methods,'' 2017,
  arXiv:1710.09990.

\bibitem{Karakus17}
C.~Karakus, Y.~Sun, S.~Diggavi, and W.~Yin, ``Straggler mitigation in
  distributed optimization through data encoding,'' in \emph{Advances in Neural
  Information Processing Systems}, 2017, pp. 5440--5448.

\bibitem{Charles17}
Z.~Charles, D.~Papailiopoulos, and J.~Ellenberg, ``Approximate gradient coding
  via sparse random graphs,'' 2017, arXiv:1711.06771.

\bibitem{Zhu17}
J.~Zhu, Y.~Pu, V.~Gupta, C.~Tomlin, and K.~Ramchandran, ``A sequential
  approximation framework for coded distributed optimization,'' 2017,
  arXiv:1710.09001.

\bibitem{Li17fund}
S.~Li, M.~A. Maddah-Ali, Q.~Yu, and A.~S. Avestimehr, ``A fundamental tradeoff
  between computation and communication in distributed computing,'' \emph{IEEE
  Transactions on Information Theory}, 2017.

\bibitem{Dimakis10}
A.~G. Dimakis, P.~B. Godfrey, Y.~Wu, M.~J. Wainwright, and K.~Ramchandran,
  ``Network coding for distributed storage systems,'' \emph{IEEE Trans. Inform.
  Theory}, vol.~56, no.~9, pp. 4539--4551, 2010.

\bibitem{Ye17}
M.~Ye and A.~Barg, ``Explicit constructions of high-rate {MDS} array codes with
  optimal repair bandwidth,'' \emph{IEEE Trans. Inform. Theory}, vol.~63,
  no.~4, pp. 2001--2014, 2017.

\bibitem{Maddah15}
M.~A. Maddah-Ali and U.~Niesen, ``Decentralized coded caching attains
  order-optimal memory-rate tradeoff,'' \emph{IEEE/ACM Transactions On
  Networking}, vol.~23, no.~4, pp. 1029--1040, 2015.

\bibitem{Joshi15}
G.~Joshi, E.~Soljanin, and G.~Wornell, ``Queues with redundancy: {L}atency-cost
  analysis,'' \emph{ACM SIGMETRICS Performance Evaluation Review}, vol.~43,
  no.~2, pp. 54--56, 2015.

\bibitem{Ana13}
G.~Ananthanarayanan, A.~Ghodsi, S.~Shenker, and I.~Stoica, ``Effective
  straggler mitigation: {A}ttack of the clones.'' in \emph{NSDI}, vol.~13,
  2013, pp. 185--198.

\bibitem{Wang14}
D.~Wang, G.~Joshi, and G.~Wornell, ``Efficient task replication for fast
  response times in parallel computation,'' in \emph{ACM SIGMETRICS Performance
  Evaluation Review}, vol.~42, no.~1.\hskip 1em plus 0.5em minus 0.4em\relax
  ACM, 2014, pp. 599--600.

\bibitem{Shah16}
N.~B. Shah, K.~Lee, and K.~Ramchandran, ``When do redundant requests reduce
  latency?'' \emph{IEEE Transactions on Communications}, vol.~64, no.~2, pp.
  715--722, 2016.

\bibitem{Candes05}
E.~J. Candes and T.~Tao, ``Decoding by linear programming,'' \emph{IEEE
  transactions on information theory}, vol.~51, no.~12, pp. 4203--4215, 2005.

\bibitem{Geman80}
S.~Geman, ``A limit theorem for the norm of random matrices,'' \emph{The Annals
  of Probability}, pp. 252--261, 1980.

\bibitem{Silverstein85}
J.~W. Silverstein, ``The smallest eigenvalue of a large dimensional {W}ishart
  matrix,'' \emph{The Annals of Probability}, pp. 1364--1368, 1985.

\bibitem{Pan16}
V.~Y. Pan, ``How bad are {V}andermonde matrices?'' \emph{SIAM Journal on Matrix
  Analysis and Applications}, vol.~37, no.~2, pp. 676--694, 2016.

\bibitem{Gautschi87}
W.~Gautschi and G.~Inglese, \emph{Lower bounds for the condition number of
  {V}andermonde matrices}.\hskip 1em plus 0.5em minus 0.4em\relax Springer,
  1987.

\bibitem{Bubeck15}
S.~Bubeck, ``Convex optimization: Algorithms and complexity,''
  \emph{Foundations and Trends{\textregistered} in Machine Learning}, vol.~8,
  no. 3-4, pp. 231--357, 2015.

\end{thebibliography}

\end{document}